\documentclass[twoside]{article}

\usepackage[accepted]{aistats2022}

\usepackage[utf8]{inputenc} %
\usepackage[T1]{fontenc}    %
\usepackage{url}            %
\usepackage{booktabs}       %
\usepackage{amsfonts}       %
\usepackage{nicefrac}       %
\usepackage{microtype}      %
\usepackage{xcolor}         %
\usepackage{wrapfig}
\usepackage{subcaption}
\usepackage{algorithm}
\usepackage{algorithmicx}
\usepackage{algpseudocode}
\usepackage{amssymb,amsmath}
\usepackage{amsthm}
\usepackage{thmtools, thm-restate}
\usepackage[mathscr]{eucal} %
\usepackage{natbib}
\usepackage{bbm}
\usepackage{arydshln}
\usepackage{enumitem}
\usepackage{cancel}
\usepackage{graphicx}
\usepackage{mdframed}
\usepackage{multicol}
\usepackage{mathtools}

\definecolor{mydarkblue}{rgb}{0,0.08,0.45}

\mdfdefinestyle{mdframedthmbox}{%
	leftmargin=.0\textwidth,
	rightmargin=.0\textwidth,%
	innertopmargin=0.75em,
	innerleftmargin=.5em,
	innerrightmargin=.5em,
}

\usepackage{amsmath,amsfonts,bm}
\usepackage{hyperref}

\newtheorem{definition}{Definition}[section]
\newtheorem{assumption}{Assumption}[section]
\newtheorem{theorem}{Theorem}[section]
\newtheorem{corollary}{Corollary}[theorem]

\newtheorem{lemma}[theorem]{Lemma}

\def\eqref#1{equation~\ref{#1}}

\def\1{\bm{1}}

\usepackage{mathrsfs}

\def\RM{{\mathscr{M}}}

\def\RT{{\mathscr{T}}}

\def\vm{{\bm{m}}}

\def\vx{{\bm{x}}}

\def\mA{{\bm{A}}}
\def\mB{{\bm{B}}}

\def\mI{{\bm{I}}}

\def\mK{{\bm{K}}}

\def\mM{{\bm{M}}}

\def\mQ{{\bm{Q}}}

\def\mV{{\bm{V}}}
\def\mW{{\bm{W}}}
\def\mX{{\bm{X}}}

\def\m0{{\bm{0}}}

\DeclareMathAlphabet{\mathsfit}{\encodingdefault}{\sfdefault}{m}{sl}
\SetMathAlphabet{\mathsfit}{bold}{\encodingdefault}{\sfdefault}{bx}{n}

\def\gF{{\mathcal{F}}}

\def\gM{{\mathcal{M}}}

\def\gT{{\mathcal{T}}}

\def\sP{{\mathbb{P}}}

\newcommand{\E}{\mathbb{E}}

\newcommand{\R}{\mathbb{R}}

\DeclareMathOperator*{\argmax}{arg\,max}
\DeclareMathOperator*{\argmin}{arg\,min}

\DeclareMathOperator{\tr}{tr}

\newcommand{\norm}[1]{\left\lVert #1 \right\rVert}
\newcommand{\lv}{\lVert}
\newcommand{\rv}{\rVert}

\begin{document}

\twocolumn[

\aistatstitle{Meta Learning MDPs with Linear Transition Models}

\aistatsauthor{ Robert M\"uller \And Aldo Pacchiano  }

\aistatsaddress{ Technical University of Munich \And  Microsoft Research, NYC } ]

\begin{abstract}
We study meta-learning in Markov Decision Processes (MDP) with linear transition models in the undiscounted episodic setting.
Under a task sharedness metric based on model proximity we study task families characterized by a distribution over models specified by a bias term and a variance component. We then propose BUC-MatrixRL, a version of the UC-Matrix RL algorithm~\citep{yang2019reinforcement} and show it can meaningfully leverage a set of sampled training tasks to quickly solve a test task sampled from the same task distribution by learning an estimator of the bias parameter of the task distribution. The analysis leverages and extends results in the learning to learn linear regression and linear bandit setting to the more general case of MDP's with linear transition models. We prove that compared to learning the tasks in isolation, BUC-Matrix RL provides significant improvements in the transfer regret for high bias low variance task distributions. 
\end{abstract}

\section{Introduction}
Meta learning~\citep{schmidhuber1987, 287172} is a long-standing quest in machine learning. The goal is to use the experience gained in past tasks to solve new tasks, coming from the same task distribution, quickly. Since meta learning is a problem formulation it can be combined with supervised learning, bandits or reinforcement learning (RL). While the move to bandits adds the challenge of exploration, the move to RL additionally introduces temporal dependence in the data. 

 To better understand which type of transfer is possible in meta RL we ask the following questions: i) what is a family of MDP's? ii) how are the family members related? iii) how can we algorithmically leverage the shared structure? iv)  given a characterisation of a family of MDP's, can we describe and quantify the gain of doing meta RL compared to standard RL on the individual tasks (\textbf{I}ndepedent \textbf{T}ask \textbf{RL})? Recently, there has been a growing body of empirical work in meta RL~\citep{varibad, pearl}, addressing the question of how to algorithmically leverage shared structure. While these successes are impressive, it remains largely open to study conditions under which and to what extent transfer of knowledge is possible. 

 Depending on the setting, the definition of the task distribution is a delicate matter. For general MDP it remains an open question to quantify and formalise task similarity. Recently ~\citep{mueller2020herd} proposed different task embeddings, which are either coupled or independent of the existence of a differentiable policy, that give rise to a distance between tasks. Another distance metric between MDP is the maximum of state-action wise differences in transitions and reward. The well-known simulation lemma~\citep{kearns2002near} uses it to quantify the zero-shot performance of a policy in an unknown MDP which is a known distance apart from a known MDP.   %

 An increasingly well-understood setting for sequential decision-making is to assume linear structure in the MDP, for example in the transition model ~\citep{yang2019reinforcement, modi2019sample, jia2020model, ayoub2020modelbased, cai2020provably, zhou2020provably, he2020logarithmic, zhou2021nearly,Jin2020ProvablyER}. %
 To come closer to study what and when meta RL is useful, we restrict ourselves in a first step to the subset of MDP with a linear transition model. This restriction allows us to represent each MDP uniquely via its transition core matrix; the space of matrix norms equips us with a natural notion of distance between tasks. In this restricted setting, we answer the question:
 
 \emph{Can meta RL yield performance gains compared to ITRL in the space of MDP with linear transition model?} 

To quantify the gain of using meta RL we analyse the transfer regret. Specifically, we propose and analyse in section \ref{sec:biased_ucmatrixRL} a biased version of UC-MatrixRL that follows the spirit of meta learning in ridge regression settings via learning a bias in~\citep{denevi2018learning, denevi2019learning, pmlr-v119-cella20a}. While doing so, we tighten the analysis of UC-MatrixRL \citep{pmlr-v119-cella20a} by using a more careful log-determinant lemma. We quantify the transfer regret in an oracle setting and characterize properties of families of tasks for which meta RL improves compared to independent task learning. Since the oracle setting is not realistic, we conclude our paper by proposing two practical estimators of the mean task distribution transition core and analyse the resulting transfer regret. 

Our paper leverages results from learning to learn in a linear regression setting. The basic concept of learning a bias vector or matrix on the meta train tasks to transfer knowledge via biased linear regression to the test tasks is applicable to all (provably efficient) algorithms based on linear regression in any MDP setting. For ease of presentation and because of its simplicity, we demonstrate the machinery on MDP with linear transition matrices. 

\section{Related Work}
\label{sec:related_work}

While deep meta reinforcement learning has obtained large empirical successes, we  wish to emphasize the lack of theoretical work in meta-learning for RL. A common approach in RL theory is to study tabular problems or MDP under linearity assumptions. This includes in particular linear mixture MDP, whose transition model is a linear function of an a priori given feature map of state-action-next state triples ~\citep{modi2019sample, jia2020model, ayoub2020modelbased, cai2020provably, zhou2020provably, he2020logarithmic, zhou2021nearly}, with the special case of MDP with linear transition model \citep{yang2019reinforcement}.  %
This is the special case considered by our work. 

Inspired from empirical successes, there is a growing body of literature on theoretical works learning to learn in the supervised setting ~\citep{denevi2018learning, denevi2019learning, khodak2019provable, konobeev2021distributiondependent, tripuraneni2021provable}. Several works studied meta learning in a bandit setting ~\citep{azar2013sequential, deshmukh2017multitask, pmlr-v119-cella20a, yang2021impact}. 
Some works study multitask RL~\citep{brunskill2013sample, calandriello2015sparse, lu2021power}. Conditions under which zero-shot RL is possible were recently investigated by ~\citep{malik2021generalizable}. 

A particularly related line of work ~\citep{denevi2018learning, denevi2019learning} propose the idea of learning to learn via learning a bias in supervised learning. The knowledge acquired during the meta train phase is distilled into a bias vector. At meta test time, the learner is  regularised towards this learned bias vector. This inspiration is extended by ~\citep{pmlr-v119-cella20a} to the stochastic bandit setting. 
Our paper is to ~\citep{yang2019reinforcement} as ~\citep{pmlr-v119-cella20a} is to \cite{AbbasiYadkori2011ImprovedAF}. 
\vspace{-0.8em}

\section{Background}
\label{sec:background}
We use small bold letters to denote vectors (e.g. $\vx$) and large bold letters to denote matrices (e.g. $\mX$).  Let $\mA \in R^{d \times d}$ be a positive definite matrix. We denote the Mahalanobis norm of a vector as $\Vert \vx \Vert_{\mA} = \sqrt{\vx^T\mA\vx}$.
Similarly, we denote the matrix version of the Malahonobis norm:
$    \Vert \mA^{1/2} \mX \Vert_F^2 = \tr(\mX^T (\mA^{1/2})^T \mA^{1/2} \mX)= \tr(\mX^T\mA\mX) = \sum_j \Vert \mX_j\Vert_{\mA} = \Vert \mX \Vert_{\mA}^2
$.
Further, we denote the 2-1 matrix norm, which is the sum of the euclidean norms of the matrix columns:
$\Vert \mX\Vert_{2,1} = \sum_j \Vert \mX_j\Vert_2 = \sum_j \langle \mX_j, \mX_j \rangle= \sum_j \sqrt{\sum_i \mX_{i,j}^2} $. We abbreviate the set %
$\{1, \dots, n \}$ as $[n]$. 
We denote by $\sum_{n',h}^{n,H}$ the double sum $\sum_{n' = 1}^n \sum_{h=1}^H$.

In this section, we introduce the single task objective regret as well as the multi-task objective of meta transfer regret. We review the UC-MatrixRL algorithm \citet{yang2019reinforcement} and derive a tighter regret bound using a more careful log determinant lemma. 
\subsection{RL in MDP's with linear transitions}
 We start by reviewing the undiscounted episodic setting. An MDP is given as sextuple: $\gM = (S,A,r,P,H, \mu_0)$ with state-space $S$, action-space $A$, reward function $r: S \times A \rightarrow \left[ 0,1 \right]$, transition probabilities $p: S \times A \times S \rightarrow \left[ 0,1 \right]$, horizon $H$ and starting distribution $\mu_0$. To simplify our presentation we will assume a single starting state $s_0$. Extending our results beyond the setting of a Dirac initial distribution is straightforward. 
The learner is to play for $N$ episodes, each consisting of $H$ steps of interaction with the environment, for a total of $T=NH$ transitions. Throughout our discussion we will use both notations of time interchangeably and note the conversion: $t(n,h) = nH + h$. Thus $(n,h) +1 $ denotes either $(n, h+1)$ if $h < H$ or otherwise the first step in the next episode $(n+1,1)$.

A policy is a mapping $\pi: S \times \left[H \right] \rightarrow A$ that maps each state, stage pair to an action. 
 The value function of a policy $\pi$ at stage $h$ is given as:
    $V^{\pi}_{h}(s) = \E \left[\sum_{t = h}^H r(s_t, \pi_t(s_t)) \right]$ and the action value function as $Q^{\pi}_h(s,a) = r(s,a) + P(\cdot |s,a)V_{h+1}^\pi$.
The optimal policy $\pi^*$ is defined as the policy that maximizes the expected sum of future rewards: $\pi^* = \argmax_{\pi \in \Pi} V^{\pi}(s) \;\; \forall s \in S$. The goal of the learner playing action $a_t$ at time $t$ is to minimize the regret after $T=NH$ steps:
\begin{align}
    R_T(\gM) =  \sum_{n=1}^N \left[V^*(s_0) -  \left(\sum_{h=1}^H r(s_{n,h}, a_{n,h}) \right) \right] \;. 
    \label{eq:regret}
\end{align}
A good learner achieves regret that is sublinear in $T$. Using regret as a performance metric allows us to compare different approaches. 

In this work, we consider each MDP $\gM = (S,A,r,P,H, \mu_0)$ as a task. A family of tasks is thus a finite or infinite set of MDPs.  Within this paper we consider task families with shared $S$, $A$ and $r$, so all the change happening between environments is in the transition probabilities $P$ \footnote{Note that, \citet{yang2019reinforcement} point out, that the assumption of a shared reward function could be lifted by adding an optimistic reward estimation step like in LINUCB \citet{dani2008stochastic}}. We restrict our analysis to the set of MDP with linear transition model. 
\begin{definition}[MDP with linear transition model]
A MDP $\gM$ has a \emph{linear transition model}, if for each $(s_t,a_t) \in S \times A, s_{t+1} \in S$, a priori given feature maps $\phi(s_t,a_t) \in \R^d$ and $\psi(s_{t+1}) \in \R^{d'}$ there exists an unknown matrix $\mM^* \in \R^{d \times d'}$, which is referred to as \emph{transition core}, such that:
\begin{align}
    P(\Tilde{s} \vert s,a) = \phi(s,a)^T\mM^*\psi(\Tilde{s}).
\end{align}
\label{def:linear_transition_model_mdp}
\end{definition}
\vspace{-2em}
Confining ourself to the space of MDP with linear transition model we see that each MDP $\gM^*$ is uniquely characterised by its transition core $\mM$. Let $\gT$ be a distribution of transition cores. A family of linear transition MDP's $\RM$ with shared $ S $, $ A $ and features $\phi$ and $\psi$ can similarly be characterised as a set of transition cores $\mM \sim \gT$ of its members. So, we write interchangeably $\E_{\gM \sim \RM}$ and $\E_{\mM \sim \RT}$.

Our paper considers the meta learning setting. In a first phase the learner interacts with the training tasks from the training task distribution $\RM_{train} \subseteq \RM$. The goal of the meta learner is to reduce in the subsequent test phase the expected meta transfer regret after $T$ steps on the test task distribution $\RM_{test}\subseteq \RM$ :
\begin{align}
        \mathrm{Mtr}_T(\RM_{test}) = \E_{\gM \sim \RM_{test}} Regret(T,\gM)\;. %
        \label{eq:transfer_regret}
\end{align}

\subsection{UC-MatrixRL algorithm}

UC-MatrixRL is an algorithm for MDP with linear transitions proposed by \citet{yang2019reinforcement}. It switches between estimating the unknown transition core $\hat{\mM}_n$ on encountered transitions $(s_t, a_t, s_{t+1})$ and acting greedily with respect to an optimistic estimate of the Q-function build using $\hat{\mM}_n$. We describe both steps below.

The feature maps $\phi_{n,h} = \phi(s_{n,h},a_{n,h}) \in \R^d$ and $\psi_{n,h} = \psi(s_{(n,h)+1}) \in \R^{d'}$ are fixed and a priori given. 
Using the identity matrix $\mI \in \R^{d \times d}$, we define the following matrices:
$\mK_\psi = \sum_{\Tilde{s} \in S} \psi(\Tilde{s}) \psi(\Tilde{s})^T$, $ \mV_n = \sum_{n' \leq n, h \leq H} \phi_{n',h}\phi_{n',h}^T $ and $ \mV^\lambda_n = \lambda \mI +\mV_n$. 
Note that $V_n$ is a symmetric matrix, $\mV^\lambda_0 = \lambda \mI$ and we abbreviate $\mM_{n,H} = \mM_n$. We have furthermore:
\begin{align}
    \E \left[\phi_{n,h}\psi_{n,h}^T\mK_{\psi}^{-1} | s_{n,h},a_{n,h} \right] &= \phi_{n,h} \phi_{n,h}^T \mM^* \;.
\end{align}
$\mM_n$ can be estimated via the ridge regression problem:
\begin{small}
\begin{align}
\argmin_{\mM} \sum_{n',h}^{n,H} \Vert  \psi_{n',h}^T \mK_{\psi}^{-1} - \phi_{n',h}^T \mM\Vert_2^2 + \lambda \Vert \mM \Vert_F^2 \;,
\label{eq:unbiased_ridge_regression_problem}
\end{align}
\end{small}
which has the solution:
\begin{align}
    \mM_n = (\mV_n^\lambda)^{-1}\sum_{n',h}^{n,H}\phi_{n',h}\psi_{n',h}^T \mK_{\psi}^{-1}\;.
\end{align}

UC-MatrixRL constructs optimistic value function estimates based on a matrix ball, $B_n = B_{\beta_n}(\mM_n)$ of radius, $\beta_n$, centred at the current estimate of the transition core $\hat{\mM}_n$. In episode $n$, the optimistic Q-function is recursively constructed as:
\begin{small}
\begin{align*}
    Q_{n, H+1}(s,a) &= 0 \; \forall (s,a) \in S \times A\\
    Q_{n,h}(s,a) &= r(s,a) + \max_{\mM \in B_n}\phi(s,a)^TM\psi^TV_{n, h+1}\; h \in \left[H \right],
\end{align*}
\end{small}
where the value function satisfies $V_{n,h}(s) = \max_a Q_{n,h}(s,a)$. Let $\Psi = \left[\psi(s_1), \dots, \psi(s_{|S|}) \right]^T \in R^{S \times d'}$ be the matrix of concatenated $\psi$ features of all states. 
In order to show regret bounds for the algorithm, we require regularity conditions in feature space. 
\begin{assumption}(Feature regularity) For positive constants $C_M, C_\phi, C_\psi, C_\psi'$ we have:
\vspace{-0.2cm}
\begin{enumerate}
    \item $\Vert \phi(s,a) \Vert_2^2 \leq C_\phi\; \forall (s,a) \in S \times A$,  %
    \item $\Vert \Psi K_\psi^{-1}\Vert_{2, \infty} \leq C_\psi' $
    \item $\Vert \Psi^T v\Vert_2 \leq C_\psi \Vert v \Vert_\infty \; \forall v \in \R^S$ 
    \item $\Vert M^*\Vert_F^2 \leq C_M d$
\end{enumerate}
\label{assumption:regular_features}
\end{assumption}
\vspace{-0.2cm}
These assumptions are almost the same as assumptions $2'$ in \citet{yang2019reinforcement}, only 1. differs in that they assume $\Vert \phi(s,a) \Vert_2^2 \leq C_\phi d$. The set of assumptions leads to a construction of a matrix ball of Frobenius norm.

\subsection{Regret bound of UC-MatrixRL}
~\citet{yang2019reinforcement} show in theorem 2 that under assumption \ref{assumption:regular_features}, with first item modified as described in the previous section, UC-MatrixRL has regret upper bounded in $O\left[C_\psi\sqrt{\Vert \mM^*\Vert_F^2 + C_\psi'^2} \ln(1 + \frac{NHC_\phi}{\lambda} ) \right] dH^2\sqrt{T} $. 
The proof relies on a version of the log-determinant lemma (lemma 8) on $| \phi_{n,h}\|_{V_n^{-1}}$. While the remainder of the paper could be done based on this analysis, we reanalyse this self normalised norm of the summed features, which improves the horizon dependence by a factor of $\sqrt{H}$ and allows emphasising the impact of the bias.%

For notational convenience, we will define: $D \coloneqq 1+\frac{nHC_\phi}{\lambda d} $. In appendix \ref{sec:app:useful_lemmas} we prove the following version of the log determinant lemma. 
\begin{restatable}[Alternative Log Determinant Lemma]{lemma}{logDetLemma}
\label{lemma:upper_bound_doubeling}
The following inequality holds,
\begin{equation*}%
     \sum_{n=1}^N \sum_{h=1}^H \| \phi_{n,h}\|_{V_n^{-1}} \leq \sum_{n=1}^N \sum_{h=1}^H 2\| \phi_{n,h}\|_{V_{n,h}^{-1}} +\frac{C_\phi}{\lambda} d \log\left(D\right) .
\end{equation*}
\end{restatable}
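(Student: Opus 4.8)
The plan is to compare the per-episode ``frozen'' design matrix $V_n$ — the one parametrizing the confidence ball used throughout episode $n$, built from the transitions gathered before episode $n$ — against the fully ``step-updated'' matrix $V_{n,h}$, which also absorbs the first $h$ transitions of the current episode (so that $V_{n,0}=V_n$, $V_{n,H}$ is the next episode's frozen matrix, and $V_n \preceq V_{n,h}$ for every $h$). Passing from the frozen to the updated matrix only shrinks the inverse-Mahalanobis norm, and the content of the lemma is that it costs at most a factor $2$ plus a logarithmic additive term; the mechanism is a ``doubling'' argument on determinants, which is exactly why the right-hand side is of the form the standard elliptical-potential lemma can digest (and why this buys the advertised $\sqrt H$).

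The single tool I need is the determinant-ratio inequality: for positive definite $A \preceq B$ and any vector $x$, $\|x\|_{A^{-1}}^2 \le \frac{\det B}{\det A}\,\|x\|_{B^{-1}}^2$, which follows by simultaneously diagonalizing $A^{-1/2}BA^{-1/2}$ and noting that each factor $\prod_{j\neq i}(1+\mu_j)\ge 1$. Applied with $A = V_n^\lambda$, $B = V_{n,h}^\lambda$ and then using $\det V_{n,h}^\lambda \le \det V_{n,H}^\lambda$, every within-episode term gets controlled by the single per-episode determinant growth $\rho_n := \det V_{n,H}^\lambda / \det V_{n,0}^\lambda$, namely $\|\phi_{n,h}\|_{(V_n^\lambda)^{-1}} \le \sqrt{\rho_n}\,\|\phi_{n,h}\|_{(V_{n,h}^\lambda)^{-1}}$.

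I would then split the episodes into \emph{stable} ones ($\rho_n \le 4$) and \emph{unstable} ones ($\rho_n > 4$). For a stable episode $\sqrt{\rho_n}\le 2$, so its whole contribution is bounded by $2\sum_h\|\phi_{n,h}\|_{(V_{n,h}^\lambda)^{-1}}$; summing over stable episodes and adding back the nonnegative updated-matrix terms of the unstable ones produces the first sum on the right-hand side. For an unstable episode I discard the updated matrix and use the crude bound $\|\phi_{n,h}\|_{(V_n^\lambda)^{-1}} \le \|\phi_{n,h}\|_{(\lambda I)^{-1}} \le \sqrt{C_\phi/\lambda}$ (from $V_n^\lambda \succeq \lambda I$ and the first item of Assumption~\ref{assumption:regular_features}). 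Unstable episodes are only logarithmically many: $\prod_n \rho_n = \det V^\lambda_{N,H}/\det(\lambda I)$ telescopes, and by trace--AM-GM $\det V^\lambda_{N,H} \le (\tr(V^\lambda_{N,H})/d)^d \le (\lambda + NHC_\phi/d)^d = \lambda^d D^d$, so $\sum_n \log\rho_n \le d\log D$ and there are at most $d\log D/\log 4$ of them. Collecting their contribution yields the additive term of the stated form.

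The step I expect to be the real work is the accounting for the unstable episodes: one must verify that charging their transitions at the worst-case rate is affordable, which hinges on the telescoping product above genuinely limiting how often a single episode can inflate the design-matrix determinant, and that the resulting bound carries $D = 1 + \tfrac{nHC_\phi}{\lambda d}$ with no spurious extra factor $d$ — which is precisely where the strengthened first item of Assumption~\ref{assumption:regular_features} (without the factor $d$ present in \citet{yang2019reinforcement}) enters. The determinant-ratio inequality, the stable-episode bound, and the trace bound itself are routine.
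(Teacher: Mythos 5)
Your strategy is the same as the paper's --- the determinant-ratio inequality (Lemma~\ref{lemma::supporting_lin_alg_result}), a doubling/telescoping argument on determinants, and a crude $\lambda\mathbf{I}$-bound on an exceptional set whose size is controlled by $d\log D$ --- but you apply it at \emph{episode} granularity where the paper applies it at \emph{step} granularity, and this changes the bound you obtain. In an unstable episode you charge all $H$ steps at the worst-case rate $\sqrt{C_\phi/\lambda}$, and since there can be up to $d\log D/\log 4$ unstable episodes, your additive term is of order $H\sqrt{C_\phi/\lambda}\,d\log D$: it carries an extra factor of $H$ that the stated additive term $\tfrac{C_\phi}{\lambda}d\log D$ does not. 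So, as written, you prove a strictly weaker inequality than the lemma claims. The paper instead defines a per-step indicator $e^c_{n,h}=\mathbf{1}\left(\|\phi_{n,h}\|_{(V_n^\lambda)^{-1}}>2\|\phi_{n,h}\|_{(V_{n,h}^\lambda)^{-1}}\right)$ and asserts $2^{\sum e^c_{n,h}}\le\det(V_N^\lambda)/\det(\lambda\mathbf{I})$, which would bound the number of exceptional \emph{steps} (not episodes) by $d\log D$.

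You should not, however, try to reproduce that step, because it does not hold: two bad steps $h_1<h_2$ in the same episode each certify $\det(V_{n,h_i}^\lambda)\ge 2\det(V_n^\lambda)$ relative to the \emph{same} frozen matrix $V_n^\lambda$, and these certificates cannot be multiplied together. Indeed the lemma as stated is false under the only consistent reading of $V_n$ (the design matrix available at the start of episode $n$, so that $V_n=V_{n,1}\preceq V_{n,h}$): take $d=1$, $N=1$, $\lambda=C_\phi=1$ and $\phi_{1,h}=1$ for all $h$. Then the left-hand side equals $H$, while $2\sum_h\|\phi_{1,h}\|_{(V_{1,h}^\lambda)^{-1}}=2\sum_{h=1}^H h^{-1/2}\le 4\sqrt{H}$ and the additive term is $\log(1+H)$, so the claimed right-hand side is $O(\sqrt{H}+\log H)\ll H$. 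The deficit is genuinely of order $H$ per unit of per-episode determinant growth, so your episode-level bound --- additive term $\tfrac{H\sqrt{C_\phi/\lambda}}{\log 4}\,d\log D$ --- is essentially the correct, provable form of the statement; it is still lower order than the main $\sqrt{T}$ term once $N\gtrsim d\log D\cdot C_\phi/\lambda$, so the downstream regret theorems survive with an adjusted constant. In short: your argument is sound for the weaker inequality it establishes, and the discrepancy with the stated lemma is a defect of the lemma, not of your proof.
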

Intuitively this means that the self normalised norm of the sum of feature matrices grows significantly slower than $H$. Despite using an inverse norm on the left hand side defined via the $V_n$ matrices and lacking all the inter-episode outer products, we can show the sum of these inverse norms can be upper bounded by the a constant multiple of the sum of the inverse norms that defined by the inter-episode $V_{n,h}$ matrices plus a term of the form $O(d\log(n))$. This lemma allows us to apply the determinant lemma and save a factor of $\sqrt{H}$ in the regret.   %

Let us define $C_{\phi,\lambda} \coloneqq (4 + C_\phi/\lambda)$. Using our version of the log determinant lemma results in the following regret bound. 
\begin{restatable}[Regret UC-Matrix RL]{theorem}{regretUcMatrixRL}
The regret of the UC-MatrixRL algorithm, under assumptions \ref{assumption:regular_features}, after $T = NH$ steps in MDP $\gM$ is upper bounded as:
\begin{align}
\begin{split}
    R_T(\mM^*) \leq & \left(C_\psi' \sqrt{d'd \log\left(NHD\right)} + \sqrt{\lambda} \Vert \mM^* \Vert_F\right) \\ & 2 C_\psi H  \sqrt{C_{\phi, \lambda}T d \ln \left(D \right)}
    \end{split}
\end{align}
\label{thm:regret_unbiased_matrix_rl_self_normalising_frobenis}
\end{restatable}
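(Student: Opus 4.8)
The proof follows the optimistic analysis of UC-MatrixRL in \citet{yang2019reinforcement}; the only substantive change is that when we sum the per-step model-estimation errors we invoke Lemma~\ref{lemma:upper_bound_doubeling} in place of the cruder bound used there, which is exactly what buys the extra $\sqrt{H}$ and the explicit dependence on $C_{\phi,\lambda}$. The plan has four parts: (i) a self-normalized concentration bound placing $\mM^*$ inside the matrix ball $B_n$; (ii) the resulting optimism $V_{n,1}(s_0)\ge V^*(s_0)$, reducing $R_T(\mM^*)$ to a sum of per-episode optimism gaps; (iii) a Bellman telescoping turning each per-episode gap into a sum of local model-error ``bonuses'' $b_{n,h}$ plus a martingale term; and (iv) the summation of the bonuses via Lemma~\ref{lemma:upper_bound_doubeling} followed by the elliptical-potential (log-determinant) lemma.

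\textbf{Confidence set and optimism.} Writing $\mM_n=(\mV_n^\lambda)^{-1}\sum_{n',h}^{n,H}\phi_{n',h}\psi_{n',h}^T\mK_\psi^{-1}$ and using the identity $\E[\phi_{n,h}\psi_{n,h}^T\mK_\psi^{-1}\mid s_{n,h},a_{n,h}]=\phi_{n,h}\phi_{n,h}^T\mM^*$, one gets $\mM_n-\mM^*=(\mV_n^\lambda)^{-1}\big(\sum_{n',h}^{n,H}\phi_{n',h}\varepsilon_{n',h}^T-\lambda\mM^*\big)$, where $\varepsilon_{n,h}:=\mK_\psi^{-1}\psi_{n,h}-(\mM^*)^T\phi_{n,h}$ is a vector-valued martingale-difference noise whose magnitude is controlled by Assumption~\ref{assumption:regular_features}.2 (via $\Vert\Psi\mK_\psi^{-1}\Vert_{2,\infty}\le C_\psi'$). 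Applying a $d'$-dimensional extension of the Abbasi-Yadkori self-normalized martingale bound (e.g.\ a union bound over an $\epsilon$-net of the unit sphere in $\R^{d'}$, using Assumption~\ref{assumption:regular_features}.4 to handle the $\lambda\mM^*$ term) yields, with probability at least $1-\delta$ and simultaneously for all $n$, $\Vert\mM_n-\mM^*\Vert_{\mV_n^\lambda}\le\beta_n$ with $\beta_n=C_\psi'\sqrt{d'd\log(NHD)}+\sqrt{\lambda}\Vert\mM^*\Vert_F$, so $\mM^*\in B_n=B_{\beta_n}(\mM_n)$. Since the optimistic Bellman backup maximizes over $\mM\in B_n\ni\mM^*$, a downward induction on $h$ gives $V_{n,h}(s)\ge V^*_h(s)$ for all $s,h$, hence $R_T(\mM^*)\le\sum_{n=1}^N\big(V_{n,1}(s_0)-\sum_{h=1}^H r(s_{n,h},a_{n,h})\big)$.

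\textbf{Telescoping and summation.} Fix $n$. Adding and subtracting the true one-step look-ahead $\phi_{n,h}^T\mM^*\Psi^T V_{n,h+1}=\E[V_{n,h+1}(s_{(n,h)+1})\mid \mathcal{F}_{n,h}]$ in the optimistic $Q$-update telescopes the episode gap into $\sum_{h=1}^H b_{n,h}+\sum_{h=1}^H\xi_{n,h}$, where $b_{n,h}=\max_{\mM\in B_n}\phi_{n,h}^T(\mM-\mM^*)\Psi^T V_{n,h+1}$ and $\{\xi_{n,h}\}$ is a martingale difference sequence bounded by $O(H)$. Cauchy--Schwarz in the $\mV_n^\lambda$-geometry, $\Vert\mM-\mM^*\Vert_{\mV_n^\lambda}\le 2\beta_n$ for $\mM,\mM^*\in B_n$, and Assumption~\ref{assumption:regular_features}.3 with $\Vert V_{n,h+1}\Vert_\infty\le H$ give $b_{n,h}\le 2\beta_n C_\psi H\,\Vert\phi_{n,h}\Vert_{(\mV_n^\lambda)^{-1}}\le 2\beta_n C_\psi H\,\Vert\phi_{n,h}\Vert_{\mV_n^{-1}}$ (as $\mV_n^\lambda\succeq\mV_n$). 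Using $\beta_n\le\beta_N$ and then Lemma~\ref{lemma:upper_bound_doubeling}, $\sum_{n,h}^{N,H}b_{n,h}\le 2\beta_N C_\psi H\big(2\sum_{n,h}^{N,H}\Vert\phi_{n,h}\Vert_{\mV_{n,h}^{-1}}+\tfrac{C_\phi}{\lambda}d\log D\big)$, where $\mV_{n,h}$ is the (intra-episode) regularized Gram matrix of all transitions through step $(n,h)$. Finally Cauchy--Schwarz gives $\sum_{n,h}\Vert\phi_{n,h}\Vert_{\mV_{n,h}^{-1}}\le\sqrt{T\sum_{n,h}\Vert\phi_{n,h}\Vert^2_{\mV_{n,h}^{-1}}}$, and the standard elliptical-potential lemma together with Assumption~\ref{assumption:regular_features}.1 bounds $\sum_{n,h}\Vert\phi_{n,h}\Vert^2_{\mV_{n,h}^{-1}}$ by $O(d\log D)$; collecting all numerical constants and the $C_\phi/\lambda$ term into $C_{\phi,\lambda}=4+C_\phi/\lambda$ gives $\sum_{n,h}^{N,H}b_{n,h}\le 2\beta_N C_\psi H\sqrt{C_{\phi,\lambda}Td\ln D}$. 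The martingale part $\sum_{n,h}\xi_{n,h}$ is $\widetilde O(H\sqrt{T})$ by Azuma--Hoeffding, which is lower order and absorbed into the stated bound; substituting $\beta_N$ completes the proof.

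\textbf{Main obstacle.} The delicate step is the confidence bound of the second paragraph: obtaining the self-normalized matrix concentration with precisely the radius $C_\psi'\sqrt{d'd\log(NHD)}+\sqrt{\lambda}\Vert\mM^*\Vert_F$. One must correctly identify the vector noise $\varepsilon_{n,h}$, verify it is a bounded (hence sub-Gaussian-type) martingale difference under Assumption~\ref{assumption:regular_features}.2, and push the $d'$-dimensional covering/union-bound extension of the self-normalized inequality through without losing more than logarithmic factors. Everything downstream---optimism, Bellman telescoping, and the elliptical-potential estimate---is routine once Lemma~\ref{lemma:upper_bound_doubeling} is available.
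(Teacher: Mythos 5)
Your proposal matches the paper's proof in all essential respects: the same self-normalized confidence ellipsoid with radius $\beta_n = C_\psi'\sqrt{d'd\log(NHD)}+\sqrt{\lambda}\Vert\mM^*\Vert_F$, the same optimism-plus-telescoping reduction to a sum of bonuses of order $2C_\psi H\beta_N\sum_{n,h}\Vert\phi_{n,h}\Vert_{(\mV_n^\lambda)^{-1}}$, and the same use of Lemma~\ref{lemma:upper_bound_doubeling} followed by Cauchy--Schwarz and the elliptical-potential lemma to produce the $\sqrt{C_{\phi,\lambda}Td\ln D}$ factor (the paper simply obtains this theorem as the $\mW=\m0$ special case of Theorem~\ref{thm:regret_biased_matrix_rl_self_normalising_frobenius}). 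Two cosmetic remarks: the paper handles the $d'$-dimensional extension of the self-normalized inequality by writing the Frobenius norm as the root of the sum of squared column norms and applying the vector bound to each of the $d'$ columns, which is simpler than an $\epsilon$-net and directly yields the column-wise control needed downstream; and your intermediate passage through $\Vert\phi_{n,h}\Vert_{\mV_n^{-1}}$ should instead keep the regularized $(\mV_n^\lambda)^{-1}$ throughout, since $\mV_n$ itself can be singular and Lemma~\ref{lemma:upper_bound_doubeling} is really a statement about the regularized matrices.
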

    \vspace{-0.60cm}
The theorem is a special case of theorem \ref{thm:regret_biased_matrix_rl_self_normalising_frobenius}, the regret of biased UC-MatrixRL, with bias matrix $\mW=0$. Our bound allows replacing a $\sqrt{2H}$ factor by $\sqrt{C_{\phi,\lambda}} = \sqrt{(4 + C_\phi/\lambda)}$ which is independent of the horizon $H$.

\section{Biased UC-MatrixRL with oracle access}
\label{sec:biased_ucmatrixRL}
In this section we describe and analyse a biased version of the UC-MatrixRL algorithm from \citet{yang2019reinforcement}. We will highlight how the optimal bias improves the regret bound. 

Given a fixed bias matrix $\mW\in \R^{d \times d'}$, the learner faces at the end of the $n$-th episode the ridge regression problem:
\begin{align}
\begin{split}
    \mM_n = \argmin_\mM \sum_{n',h}^{n,H} & \Vert  \psi_{n',h}^T \mK_{\psi}^{-1} - \phi_{n',h}^T \mM\Vert_2^2 \\ + &\lambda \Vert \mM -\mW\Vert_F^2 \;.
    \label{eq:biased_ridge_regression_matrixRL}
\end{split}
\end{align}

We refer to $\mW$ as the bias matrix or bias transition core. Using ridge regression with the change of variables $\mM = \mB+\mW$ and solving for $\mB$ yields for $\mM_n$ the solution:
\begin{align}
\begin{small}
     \mW + \left(\mV_n^\lambda \right)^{-1} \sum_{n',h}^{n,H} \phi_{n,h} \left(\psi_{n',h}^T \mK_{\psi}^{-1}- \phi_{n',h}^T\mW \right).
    \label{eq:solution_biased_uc_matrixRL}
\end{small}
\end{align}
The resulting algorithm is given in algorithm \ref{Alg:BUC-MatrixRL}.

{\centering
\begin{minipage}{.99\linewidth}
\begin{algorithm}[H]
\caption{Within Task Biased Upper-Confidence Matrix RL (BUC-MatrixRL)}\label{Alg:BUC-MatrixRL}
\begin{algorithmic}[0]
    \State \textbf{Input:} MDP$(S,A,P,r,s_0,H)$, features $\phi: S \times A \rightarrow \R^d \; , \psi: S \rightarrow \R^{d'}$, $\lambda>0$,  $\hat{W}_0$, $N$ 
    \State \textbf{Initialize:} $\hat{\mM}_0 = \hat{\mW}_0$, $(\mV_0^\lambda)^{-1} = \frac{1}{\lambda} \mI$ 
    \For{episode $n = 1, \dots, N$:} 
        \State  Build optimistic Q-function using $\mM_n, \beta_n$
        \For{step $h = 1, \dots, H:$} 
            \State select greedy action $a_{n,h}$ using $Q_{n,h}$ %
            \State record next state $s_{n,h+1}$
        \EndFor
        \State update feature matrix $\mV_{n+1} = \mV_n + \sum_{h}^H \phi_{n,h}\phi_{n,h}^T$
        \State recompute ellipsoid radius $\beta_{n+1}$ like in eq. \ref{eq:frobenius_oracle_ellipsoid} 
        \State possibly update bias estimate $\hat{\mW}_{n+1}$ 
        \State recompute core estimate $\hat{\mM}_{n+1}$ using eq. \ref{eq:solution_biased_uc_matrixRL}
    \EndFor
\end{algorithmic}
\end{algorithm}
 \end{minipage}
}

\subsection{Analysis of biased UC-MatrixRL}
We begin by analysing the biased UC-MatrixRL algorithm and analyse afterwards ways to obtain an estimate of the bias. The proof works in two steps, first we show that the optimal transition core $\mM^*$ is with high probability in an over time shrinking ball around $M_n$ and secondly we bound the resulting regret. The employed proof techniques are primarily based on \cite{yang2019reinforcement}, \cite{AbbasiYadkori2011ImprovedAF} and \cite{pmlr-v119-cella20a}.

\subsubsection{Naive construction of confidence balls}
\label{subsec:naive confidence balls}

In order to construct confidence balls we will use self normalizing inequalities. We use them to bound the deviation of the optimal transition core from the algorithm's estimate $\hat{\mM_n}$. 
\begin{restatable}[Confidence Ellipsoid BUC-MatrixRL]{theorem}{ConfidenceSetBucMatrixRL}
Let $\delta>0$. BUC-MatrixRL with bias matrix $\mW$, produces under the assumptions ~\ref{assumption:regular_features} with probability at least $1-\delta$ for all $t >0$ estimates $\hat{\mM_n}$, of the core matrix,  that satisfy:
\begin{align}
\begin{split}
        \Vert\hat{\mM}_n - \mM^* \Vert_F 
        \leq &  C_\psi' \sqrt{2d'\log\left(\frac{1}{\delta}\right) +  d'd \log\left( D\right)} \\+& \sqrt{\lambda} \Vert \mW-\mM^* \Vert_F \eqqcolon \beta_n^\mW(\delta)\;.
    \label{eq:frobenius_noise_ball}
    \end{split}
\end{align}
Thus for all times $t>1$ and recalling the regularity of the feature map $\phi$ we have with probability at least $1-\delta$, that the true transition core $\mM^*$ is contained in the ellipsoid $C_n(\delta) = C^\mW_{\beta_n^\mW(\delta)}(\hat{\mM}_n) = \{\mM \in \R^{d \times d'} | \Vert\hat{\mM}_n - \mM\Vert_F \leq \beta_n^\mW(\delta) \}$ of radius $\beta_n^\mW(\delta)$ and centroid $\hat{\mM}_n$. Clearly it holds $\beta_n^\mW(\delta) \leq \beta_n^\m0(\delta)$, which combined with item 4 of assumption \ref{assumption:regular_features} yields the radius $\beta_n^\m0(\delta)$:
\begin{align}
\begin{split}
               C_\psi' \sqrt{2d'\log\left(\frac{1}{\delta}\right) + d'd \log\left(D\right)} + \sqrt{\lambda} dC_m
             \end{split}
    \label{eq:frobenius_oracle_ellipsoid}
\end{align}
\label{thm:frobenius_oracle_ellipsoid}
\end{restatable}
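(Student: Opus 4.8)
The plan is to adapt the self-normalized martingale argument of \citet{AbbasiYadkori2011ImprovedAF} — in the matrix-valued form already used for UC-MatrixRL by \citet{yang2019reinforcement} and, with a learned bias, by \citet{pmlr-v119-cella20a} — to the biased ridge estimate. The first step is an error decomposition. Setting $\eta_{n',h}\coloneqq\psi_{n',h}^{T}\mK_{\psi}^{-1}-\phi_{n',h}^{T}\mM^{*}\in\R^{1\times d'}$ and substituting $\psi_{n',h}^{T}\mK_{\psi}^{-1}=\phi_{n',h}^{T}\mM^{*}+\eta_{n',h}$ into \eqref{eq:solution_biased_uc_matrixRL}, the contributions of $\mW$ and $\mM^{*}$ telescope through $(\mV_{n}^{\lambda})^{-1}\mV_{n}=\mI-\lambda(\mV_{n}^{\lambda})^{-1}$, leaving
\begin{align*}
    \hat{\mM}_{n}-\mM^{*}\;=\;\lambda(\mV_{n}^{\lambda})^{-1}(\mW-\mM^{*})\;+\;(\mV_{n}^{\lambda})^{-1}\mS_{n},\qquad \mS_{n}\coloneqq\sum_{n',h}^{n,H}\phi_{n',h}\,\eta_{n',h}.
\end{align*}
The process $\mS_{n}$ is a matrix-valued martingale with respect to the filtration $\gF_{n',h}$ generated by everything up to the choice of $a_{n',h}$: the linear transition model of Definition~\ref{def:linear_transition_model_mdp} together with $\mK_{\psi}=\sum_{\tilde s}\psi(\tilde s)\psi(\tilde s)^{T}$ gives $\E[\psi_{n',h}^{T}\mK_{\psi}^{-1}\mid\gF_{n',h}]=\phi_{n',h}^{T}\mM^{*}$ (equivalently the identity $\E[\phi_{n,h}\psi_{n,h}^{T}\mK_{\psi}^{-1}\mid s_{n,h},a_{n,h}]=\phi_{n,h}\phi_{n,h}^{T}\mM^{*}$ recorded in Section~\ref{sec:background}), so each increment $\phi_{n',h}\eta_{n',h}$ is conditionally mean zero with $\phi_{n',h}$ measurable.

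Next I would bound the two pieces in the $\mV_{n}^{\lambda}$-weighted norm. For the bias piece, $\|(\mV_{n}^{\lambda})^{-1}\mX\|_{\mV_{n}^{\lambda}}=\|\mX\|_{(\mV_{n}^{\lambda})^{-1}}$ and $(\mV_{n}^{\lambda})^{-1}\preceq\lambda^{-1}\mI$, hence $\lambda\|(\mV_{n}^{\lambda})^{-1}(\mW-\mM^{*})\|_{\mV_{n}^{\lambda}}\le\sqrt{\lambda}\,\|\mW-\mM^{*}\|_{F}$; this is the deterministic $\sqrt{\lambda}\,\|\mW-\mM^{*}\|_{F}$ summand, and it is where the bias enters. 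For the noise piece, $\|(\mV_{n}^{\lambda})^{-1}\mS_{n}\|_{\mV_{n}^{\lambda}}=\|\mS_{n}\|_{(\mV_{n}^{\lambda})^{-1}}$, and I would invoke a self-normalized tail bound for vector-valued noise — the matrix analogue of \citet[Theorem~1]{AbbasiYadkori2011ImprovedAF} used by \citet{yang2019reinforcement} — to obtain, with probability at least $1-\delta$ and uniformly in $n$,
\begin{align*}
    \|\mS_{n}\|_{(\mV_{n}^{\lambda})^{-1}}^{2}\;\le\;2\,(C_{\psi}')^{2}\,d'\Bigl(\log\tfrac{1}{\delta}+\tfrac12\log\tfrac{\det(\mV_{n}^{\lambda})}{\det(\lambda\mI)}\Bigr).
\end{align*}
The noise scale is $C_{\psi}'$ because $\|\eta_{n',h}\|_{2}$ is bounded by a constant times $C_{\psi}'$: a row of $\Psi\mK_{\psi}^{-1}$ has Euclidean norm at most $C_{\psi}'$ by the second item of Assumption~\ref{assumption:regular_features}, and $\phi_{n',h}^{T}\mM^{*}=\sum_{\tilde s}P(\tilde s\mid s_{n',h},a_{n',h})\,\psi(\tilde s)^{T}\mK_{\psi}^{-1}$ is a convex combination of such rows. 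The factor $d'$ is the cost of an $\R^{d\times d'}$-valued martingale and comes either from a union bound over the $d'$ columns of $\mS_{n}$ or from a covering-net argument over the unit sphere in $\R^{d'}$. The determinant ratio is controlled by the determinant--trace (AM--GM) inequality and the first item of Assumption~\ref{assumption:regular_features}: $\det(\mV_{n}^{\lambda})/\det(\lambda\mI)=\det(\mI+\lambda^{-1}\mV_{n})\le(1+\tfrac{nHC_{\phi}}{\lambda d})^{d}=D^{d}$, hence $\tfrac12\log(\det(\mV_{n}^{\lambda})/\det(\lambda\mI))\le\tfrac d2\log D$ — precisely the step where the sharpened Assumption~\ref{assumption:regular_features}(1) (with $C_{\phi}$ rather than $C_{\phi}d$ as in \citet{yang2019reinforcement}) avoids an extra dimension factor. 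Collecting, $\|\mS_{n}\|_{(\mV_{n}^{\lambda})^{-1}}\le C_{\psi}'\sqrt{2d'\log(1/\delta)+d'd\log D}$.

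Finally, the triangle inequality in the $\mV_{n}^{\lambda}$-norm adds the two bounds into $\beta_{n}^{\mW}(\delta)$; passing to the Frobenius norm then yields \eqref{eq:frobenius_noise_ball}, the uniform-in-$n$ form of the self-normalized inequality makes the event simultaneous over all $N$ episodes at confidence $1-\delta$, and $\mM^{*}\in C_{n}(\delta)$ follows by the definition of the ball. The last claim, $\beta_{n}^{\mW}(\delta)\le\beta_{n}^{\m0}(\delta)$ together with the explicit radius \eqref{eq:frobenius_oracle_ellipsoid}, is obtained by replacing $\|\mW-\mM^{*}\|_{F}$ with the a-priori bound on $\|\mM^{*}\|_{F}$ supplied by the fourth item of Assumption~\ref{assumption:regular_features}. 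I expect the main obstacle to be the noise step: establishing the matrix self-normalized inequality with the correct $d'$ dependence while keeping the noise scale at $C_{\psi}'$ — the normalization of the regression targets by $\mK_{\psi}^{-1}$ is exactly what makes this possible — and, secondarily, handling the passage from the $\mV_{n}^{\lambda}$-weighted norm to the Frobenius norm carefully, since that conversion and the precise role of $\lambda$ govern whether the Frobenius radius in \eqref{eq:frobenius_noise_ball} holds as stated or only its $\mV_{n}^{\lambda}$-weighted counterpart (which is what the subsequent regret bound actually uses).
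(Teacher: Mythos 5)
Your proposal follows essentially the same route as the paper's proof: the same decomposition $\hat{\mM}_n-\mM^*=\lambda(\mV_n^\lambda)^{-1}(\mW-\mM^*)+(\mV_n^\lambda)^{-1}\sum\phi_{n',h}\eta_{n',h}$, the same column-wise application of the self-normalized martingale bound of \citet{AbbasiYadkori2011ImprovedAF} yielding the $d'$ factor, the same elliptical-potential control of the determinant ratio, and the same $\sqrt{\lambda}\,\Vert\mW-\mM^*\Vert_F$ bias term. The one subtlety you flag at the end --- that the argument most directly bounds the $\mV_n^\lambda$-weighted error and the conversion to the Frobenius radius needs $\mV_n^\lambda\succeq\mI$ --- is real and is present in the paper's own Cauchy--Schwarz step as well, so your version is, if anything, more candid about it.
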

\vspace{-0.8cm}
It is a common choice to select $\delta = 1/(NH)$. We defer the proof to the appendix \ref{subsec:app:confidence_set_construction}. Having obtained the confidence ball, we are now ready to obtain a regret bound for BUC-MatrixRL. 

\subsubsection{Regret bound}
Using the fact that an ellipsoid of shrinking volume around the current core matrix estimate contains the true core matrix with high probability, results in the following regret:
\begin{restatable}[Regret BUC-Matrix RL]{theorem}{regretBucMatrixRL}
Under the assumptions of theorem \ref{thm:frobenius_oracle_ellipsoid} after $T=NH$ steps, choosing the ellipsoid radius $\beta^\mW_n(\delta)$ as in \ref{thm:frobenius_oracle_ellipsoid} BUC-MatrixRL abides with probability at least $1-1/(NH)$ the following bound on the regret:
\begin{align*}
\begin{split}
        R_T(\mM^*) \leq 
        & \left(C_\psi' \sqrt{d'd \log\left( TD\right)} + \sqrt{\lambda} \Vert \mW-\mM^* \Vert_F\right)\\
        & 2 C_\psi H  \sqrt{C_{\phi,\lambda}Td \ln \left(D \right)}
    \end{split}
\end{align*}
\label{thm:regret_biased_matrix_rl_self_normalising_frobenius}
\end{restatable}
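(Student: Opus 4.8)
The plan is to follow the standard optimistic-algorithm regret decomposition, but carefully tracking the bias matrix $\mW$ through the confidence-set radius, and then invoking the improved log-determinant lemma (Lemma \ref{lemma:upper_bound_doubeling}) to shave the $\sqrt{H}$ factor. First I would condition on the high-probability event from Theorem \ref{thm:frobenius_oracle_ellipsoid}, which holds with probability at least $1-1/(NH)$ after the choice $\delta = 1/(NH)$, so that $\mM^* \in C_n(\delta)$ for all $n$. On this event, optimism gives $V_{n,1}(s_0) \geq V^*(s_0)$, since the optimistic Q-function maximizes over a matrix ball containing $\mM^*$; hence the per-episode regret $V^*(s_0) - \sum_h r(s_{n,h},a_{n,h})$ is bounded by $V_{n,1}(s_0) - \sum_h r(s_{n,h},a_{n,h})$.

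Next I would unroll the optimistic value function along the trajectory actually played. Writing $\hat{\mM}_n^{\mathrm{opt}}$ for the maximizing core in episode $n$, the telescoping identity turns the per-episode regret into a sum over $h$ of the one-step prediction gap $\phi(s_{n,h},a_{n,h})^T(\hat{\mM}_n^{\mathrm{opt}} - \mM^*)\Psi^T V_{n,h+1}$ plus a martingale difference term (the difference between the expected next-state value and the realized one). The martingale term sums to $O(H\sqrt{T\log(1/\delta)})$ by Azuma-Hoeffding, using that value functions are bounded by $H$. For the prediction gap, both $\hat{\mM}_n^{\mathrm{opt}}$ and $\mM^*$ lie in the ellipsoid $C_n(\delta)$ of Frobenius radius $\beta_n^\mW(\delta)$, so by Cauchy-Schwarz in the $\mV_n^\lambda$ geometry, $|\phi_{n,h}^T(\hat{\mM}_n^{\mathrm{opt}}-\mM^*)\Psi^T V_{n,h+1}| \leq \|\phi_{n,h}\|_{(\mV_n^\lambda)^{-1}} \cdot 2\beta_n^\mW(\delta) \cdot \|\Psi^T V_{n,h+1}\|_2 \leq 2\beta_n^\mW(\delta) C_\psi H \|\phi_{n,h}\|_{(\mV_n^\lambda)^{-1}}$, using assumption \ref{assumption:regular_features}(3) and $\|V_{n,h+1}\|_\infty \leq H$. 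Summing over $h$ and $n$ leaves $\sum_{n,h}\|\phi_{n,h}\|_{(\mV_n^\lambda)^{-1}}$ to control.

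The key step — and the main payoff of the paper's refinement — is to bound $\sum_{n,h}\|\phi_{n,h}\|_{(\mV_n^\lambda)^{-1}}$. Applying Lemma \ref{lemma:upper_bound_doubeling} replaces this by $2\sum_{n,h}\|\phi_{n,h}\|_{(\mV_{n,h}^\lambda)^{-1}} + (C_\phi/\lambda)d\log D$; then Cauchy-Schwarz over the $T$ terms plus the standard elliptical-potential (log-determinant) lemma gives $\sum_{n,h}\|\phi_{n,h}\|_{(\mV_{n,h}^\lambda)^{-1}} \leq \sqrt{T \cdot d\log D}$ up to constants. Collecting the constants $2$ and $C_\phi/\lambda$ into $C_{\phi,\lambda} = 4 + C_\phi/\lambda$ yields $\sum_{n,h}\|\phi_{n,h}\|_{(\mV_n^\lambda)^{-1}} \leq \sqrt{C_{\phi,\lambda} T d \ln D}$ in the form stated. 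Multiplying by $2\beta_n^\mW(\delta) C_\psi H$, substituting $\beta_n^\mW(\delta) = C_\psi'\sqrt{2d'\log(NH) + d'd\log D} + \sqrt{\lambda}\|\mW-\mM^*\|_F$ and absorbing the $\log(NH)$ into $\log(TD)$ (since $T = NH$), gives exactly the claimed bound; the Azuma term is lower order and is absorbed into the constants.

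I expect the main obstacle to be the bookkeeping in the telescoping/unrolling step: one must be careful that the optimistic value iteration uses a fixed $\mM$ per episode only inside each Bellman backup (the max over $\mM \in B_n$ is taken stage-wise), so the "one-step gap" argument has to be set up so that at stage $h$ the comparison is against the true Bellman operator applied to the same $V_{n,h+1}$, and the error from $\mM^*$ not being the stage-$h$ maximizer goes the favorable direction by optimism. Handling this cleanly — rather than the elliptical-potential machinery, which is routine once Lemma \ref{lemma:upper_bound_doubeling} is in hand — is where care is needed; this is also where we lean on the argument structure of \citet{yang2019reinforcement}.
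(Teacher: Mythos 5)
Your proposal is correct and follows essentially the same route as the paper: condition on the confidence-ellipsoid event of Theorem \ref{thm:frobenius_oracle_ellipsoid}, use optimism plus the Cauchy--Schwarz step (the paper's Lemma \ref{lemma:estimation_error_in_exploration_direction}) to reduce the per-step error to $2C_\psi H\beta_n\|\phi_{n,h}\|_{(\mV_n^\lambda)^{-1}}$, and then control the summed self-normalized norms via Lemma \ref{lemma:upper_bound_doubeling} together with the elliptical potential lemma to obtain the $\sqrt{C_{\phi,\lambda}Td\ln D}$ factor. The telescoping/martingale bookkeeping you flag is exactly what the paper delegates to the analogues of Lemmas 6 and 7 of \citet{yang2019reinforcement}, so there is no substantive difference.
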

\vspace{-0.5cm}
We defer the proof to appendix \ref{subsec:app:uc_matrixrl_regret}. We wish to emphasise the behaviour of biased UC-MatrixRL for two special choices of bias transition cores, the origin $\m0 \in \R^{d \times d'}$ and the true transition core $\mM^*$:

\begin{enumerate}
    \item Choosing an \textbf{uninformed transition core} $W = \m0 \in \R^{d \times d'}$ recovers unbiased UC-MatrixRL, thus the regret bound from theorem \ref{thm:regret_unbiased_matrix_rl_self_normalising_frobenis} applies. 
    \item \textbf{Oracle Prior} When the algorithm has access to an oracle, it is possible to set the bias transition core of BUC-MatrixRL to the true transition core, $\mW = \mM^*$. As a result the term $\sqrt{\lambda} \Vert \mW-\mM^* \Vert_F$ vanishes.
        Recalling the definition of $D$ as $1+\frac{nHC_\phi}{\lambda d},$ it is clear that the regret goes to $0$ as $\lambda \rightarrow \infty$. Thus, the bigger the regularisation strength, $\lambda$, the smaller the regret we suffer. This is since for large $\lambda$, $\Bar{\mM}_n = \mM^*$, thus the learner selects action greedily with respect to the true Q-function obtained by using the true dynamics. 
\end{enumerate}

\subsection{Meta Transfer Regret Bound}
The goal of meta learning to generalise from the training tasks to all tasks of the task distribution at hand. While we bounded the single task regret in the previous section, what we are really interested in is the transfer regret, thus $E_{\gM \sim \RM}R_T(\gM)$, where $\RM$ is the distribution of all MDP. 
To do so, we define the mean and variance of the task distribution. As each sampled transition core $M$ gives rise to an MDP $\gM \in \RM$, we have an equivalence of sampling transition cores $M$ from the distribution of transition cores $\RT$ or sampling MDP $\gM$ from the distribution of MDPs $\RM$. Thus, we can write interchangeably $\E_{\gM \sim \RM}$ and $\E_{\mM \sim \RT}$. Recall the transfer regret from equation \ref{eq:transfer_regret} as being the expectation of the individual task regret across the distribution of considered MDPs $\RM$. 

We are interested to study the impact of the bias matrix $\mW$ on the incurred transfer regret. Let us define the variance and the mean absolute distance of the bias matrix relative to the distribution of environments $\gT$: 
\begin{align*}
    \text{Var}_W(\gT) = \E_{\mM \sim \gT} \left[ \Vert \mM - \mW \Vert_{F}^2 \right] \\ \text{Mad}_W(\gT) = \E_{\mM \sim \gT} \left[ \Vert \mM - \mW \Vert_{F} \right]\;.
\end{align*}
We skip the argument, the task distribution $\gT$, whenever it is clear form the context that it is either the training task or test task distribution. We are now ready to state a first bound on the $\mathrm{Mtr}$:
\begin{restatable}[Meta Transfer Regret BUC-MatrixRL]{theorem}{MetaRegretBucMatrixRL}Under the assumptions %
of theorem \ref{thm:regret_biased_matrix_rl_self_normalising_frobenius} we have with probability at least $1-1/(NH)$ for a task distribution $\gT$ the following $\mathrm{Mtr}$ after $T$ steps per task (where we absorb constant factors into $C$):
\begin{align*}
\begin{split}
        \mathrm{Mtr}_T(\gT)  \leq &
        C C_\psi H C_\psi'd\sqrt{d'TC_{\phi, \lambda}\log\left( TD\right)\ln \left(D \right)}\\ & + CC_\psi H \textup{Mad}_\mW \sqrt{\lambda T C_{\phi, \lambda} d \ln \left(D \right)}\\
     \leq &
    C C_\psi H C_\psi'd\sqrt{d'TC_{\phi, \lambda}\log\left( TD\right)\ln \left(D \right)}\\ & + CC_\psi H \sqrt{\textup{Var}_\mW\lambda T C_{\phi, \lambda}d \ln \left(D \right)}
\end{split}
\end{align*}\label{thm:transfer_regret_frobenius_oracle}
\end{restatable}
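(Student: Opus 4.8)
The plan is to derive the bound directly from the single-task regret guarantee of Theorem~\ref{thm:regret_biased_matrix_rl_self_normalising_frobenius} by averaging over the task distribution and invoking Jensen's inequality once.

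First I would fix the bias matrix $\mW$ and recall from Theorem~\ref{thm:regret_biased_matrix_rl_self_normalising_frobenius} that, for a task with transition core $\mM^*$, on a ``good'' event of within-task probability at least $1-1/(NH)$ the regret satisfies
\begin{align*}
R_T(\mM^*) \leq \left(C_\psi' \sqrt{d'd \log\left(TD\right)} + \sqrt{\lambda}\,\Vert \mW-\mM^*\Vert_F\right) 2 C_\psi H \sqrt{C_{\phi,\lambda} T d \ln D}\;.
\end{align*}
The right-hand side is the sum of a term independent of $\mM^*$ and a term proportional to $\Vert \mW - \mM^*\Vert_F$. I would then take the expectation $\E_{\mM^*\sim\gT}$ (restricted to this good event, which also explains the ``with probability $1-1/(NH)$'' qualifier in the statement) and use linearity of expectation: the first term is unchanged, and the second becomes $\sqrt{\lambda}\,\E_{\mM^*\sim\gT}\Vert \mW-\mM^*\Vert_F = \sqrt{\lambda}\,\textup{Mad}_\mW(\gT)$. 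This expectation is finite since item~4 of Assumption~\ref{assumption:regular_features} bounds $\Vert\mM^*\Vert_F$ uniformly over the support of $\gT$, hence $\Vert\mW-\mM^*\Vert_F$ is bounded. Merging the square-root factors via $C_\psi'\sqrt{d'd\log(TD)}\cdot\sqrt{C_{\phi,\lambda}Td\ln D}=C_\psi' d\sqrt{d'TC_{\phi,\lambda}\log(TD)\ln D}$ and absorbing the constant $2$ into $C$ yields precisely the first displayed bound.

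For the second bound I would apply Jensen's inequality to the concave function $\sqrt{\cdot}$: $\textup{Mad}_\mW(\gT)=\E_{\mM\sim\gT}\Vert\mM-\mW\Vert_F\leq\sqrt{\E_{\mM\sim\gT}\Vert\mM-\mW\Vert_F^2}=\sqrt{\textup{Var}_\mW(\gT)}$, and substitute this into the second term, converting $\textup{Mad}_\mW\sqrt{\lambda T C_{\phi,\lambda}d\ln D}$ into $\sqrt{\textup{Var}_\mW\,\lambda T C_{\phi,\lambda}d\ln D}$.

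The only slightly delicate point is the bookkeeping around the high-probability event: Theorem~\ref{thm:regret_biased_matrix_rl_self_normalising_frobenius} controls $R_T$ only on a within-task event of probability $\geq 1-1/(NH)$, so one must either interpret $\mathrm{Mtr}_T(\gT)$ as the expectation of the regret restricted to that event, or take the within-task good event to hold uniformly across tasks so that the bound on $\E_{\mM\sim\gT}R_T(\mM)$ holds on an event of probability $\geq 1-1/(NH)$. Apart from this, the proof is nothing more than linearity of expectation followed by Jensen, so I do not anticipate any real obstacle.
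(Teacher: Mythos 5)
Your proposal is correct and follows exactly the paper's own (very terse) argument: apply Theorem~\ref{thm:regret_biased_matrix_rl_self_normalising_frobenius} per task, take the expectation over $\gT$ as in the definition of $\mathrm{Mtr}$ to obtain the $\textup{Mad}_\mW$ term, and then use Jensen's inequality to pass to $\sqrt{\textup{Var}_\mW}$. Your additional remark about the bookkeeping of the within-task high-probability event is a fair point that the paper glosses over, but it does not change the substance of the argument.
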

\vspace{-0.5cm}
We will now interpret the obtained upper bound on the transfer regret for different choices of regularisation strength $\lambda$ and bias matrix $\mW$: %
\begin{enumerate}
    \item We obtain in the limit of an infinite regularisation strength (omitting a constant 4 from $C_{\phi, \lambda}$):
    \begin{align*}
    \begin{split}
        &\lim_{\lambda \rightarrow \infty} \mathrm{Mtr}_T(\gT) \\
            &\leq \lim_{\lambda \rightarrow \infty}CC_\psi H \sqrt{\textup{Var}_\mW\lambda T d \ln \left( 1+\frac{nHC_\phi}{\lambda d} \right)}\\
         &= CC_\psi H \sqrt{\textup{Var}_\mW T^2C_\phi}
    \end{split}
    \end{align*}
    a $\sqrt{\textup{Var}_\mW}$ dependence of the transfer regret on the variance of the task distribution with respect to the bias matrix $\mW$.  
    \item It turns out that for a particular choice of $\lambda$ we can actually improve the $\sqrt{\textup{Var}_\mW}$ dependence of the transfer regret. For a matrix $\mQ \in \R^{d \times d'}$, we define the function $\text{DVar}: R^{d \times d'} \rightarrow \R\;; \text{DVar}(Q) \coloneqq \left(1 + \frac{T^2 \text{Var}_{\mQ} C_\phi}{d}\right)$. Let $\lambda = \frac{1}{T\textup{Var}_W}$, which is motivated by the following two aspects. Firstly, the reliance on the prior should shrink the more samples are seen in a particular task, thus the $1/T$ part. Moreover, we want the learner to rely less on the prior $\mW$ if the task distribution is very broad as indicated by a high task variance $\textup{Var}_\mW$, thus the $1/\textup{Var}_\mW$. The resulting transfer regret is:
    \begin{align*}
    \begin{split}
        \mathrm{Mtr}_T(\gT) & =\left( 1 +  C_\psi' \sqrt{d'dT \log\left( T \text{DVar}(\mW)\right)  } \right)\\
        &CC_\psi H\sqrt{ C_{\phi, \lambda}d \ln \left(\text{DVar}(\mW) \right)}   \;.
        \end{split}
        \end{align*} 
    This means that the choice of regularisation strength $\frac{1}{T\textup{Var}_\mW}$ yields a $\sqrt{\log(1+\textup{Var}_\mW)}$ dependence, whereas the limit of $\lambda \rightarrow \infty$ yields a worse dependence of $\sqrt{\textup{Var}_\mW}$. 
    \item Keeping $\lambda = \frac{1}{T\textup{Var}_\mW}$ the transfer regret BUC-MatrixRL with the bias matrix chosen as mean transition core $\Bar{\mM}$ goes to zero as the variance of the task distribution goes to 0. 
    \item \textbf{Oracle BUC-MatrixRL} improves against individual task learning, whenever the variance of the task distribution is much lower than its offset from the origin:
\begin{align*}
\begin{split}
        &\textup{Var}_{\Bar{\mM}} = \E_{\mW \sim \gT} \Vert \mM - \Bar{\mM} \Vert_F^2 \\&\ll \E_{\mM \sim \gT} \Vert \mM \Vert_F^2 = \textup{Var}_0 \;.
        \end{split}
\end{align*}
\end{enumerate}

Choosing an appropriate step size of $\lambda = \frac{1}{T\textup{Var}_\mW}$ allows for a dependence of the upper bound of the transfer regret of $\sqrt{\ln(1+\textup{Var}_\mW)}$.
In the limit of no variance of the task distribution we recover the single task regret bound from theorem \ref{thm:regret_biased_matrix_rl_self_normalising_frobenius}.

\section{Practical Meta Learning with biased UC-Matrix RL}
So far we assumed access to an oracle of the optimal transition core $\mM^*$ and showed its usefulness in terms of incurred transfer regret. In any practical setting we do however have no access to an oracle, thus we want to transfer knowledge from previous tasks. Following the meta learning protocol, we have access to $G$ many training tasks. Each task $g$ was created by sampling $M_g \sim \gT$. We run biased UC-MatrixRL with a bias transition core based on the transition core estimates of previous tasks for $N$ episodes a $H$ steps in MDP $\gM_g$. To enable generalisation across the full distribution of tasks, we use at meta test time  BUC-Matrix RL with a bias distilling the transferable knowledge from meta training. The full training protocol %
is given in algorithm \ref{Alg:Meta-Train}. 
In the case of sequentially arriving tasks $1, \dots, g$ we can always consider all previous tasks $1, \dots, g-1$ as training tasks, and the test set as the one element set containing only the $g$-th task. Thus, our approach caters also to the continual learning setting. 
{\centering
\begin{minipage}{.99\linewidth}
    \begin{algorithm}[H]
        \caption{Meta Train}
    \label{Alg:Meta-Train}
    \begin{algorithmic}[0]
    \State \textbf{Input:} set of training tasks $\gM_1, \dots, \gM_G$, features $\phi: S \times A \rightarrow \R^d \; , \psi: S \rightarrow \R^{d'}$, $\lambda>0$,  bias for the first task $\hat{\mW}_0$, episode number $N$, horizon $H$ 
    \State \textbf{Initialize:} $\hat{\mM}_{0,0} = \hat{\mW}_0$, $\mV_0^{-1} = \frac{1}{\lambda} \mI$ 
    \For{train task $\gM_g \in \{ \gM_1, \dots, \gM_G \}$:}
    \State run alg. \ref{Alg:BUC-MatrixRL} on MDP $\gM_g$ with bias $\mW = \hat{\mW}_{g-1}$
    \EndFor
    \end{algorithmic}
    \end{algorithm}
\end{minipage}
}

Inspired from the bias estimators in \cite{pmlr-v119-cella20a} we describe now two methods to estimate the bias transition core matrix in episode $n$ of task $g$, $\mW_{g,n}$, based on previous experience.

To account for the multitask setting we add a subscript indicating the task index $g \in \left[G \right]$. Recall our assumption that all tasks share the same feature maps $\phi$ and $\psi$ and the interchangeability of the $T$ and $nh$ time within a task. For the $g$-th MDP, we have the transition core $M_g$, feature matrix $\mV_{g,t} = \mV_{g,t//H, t\%H} = \sum_{p \leq t} \phi(s_{g,p}) \phi(s_{g,p})^T =\sum_{p \leq t} \phi_{g,p}\phi_{g,p}^T$, the concatenation of all features in episode $n$: $\Phi_{g,n} = \left[\phi_{g,n,1}, \dots, \phi_{g,n,H} \right]^T$ and $\Psi_{g,n} = \left[\psi_{g,n,1}, \dots, \psi_{g,n,H} \right]^T$ and the radius of the confidence ellipsoid at stage $t$ as $\beta^\m0_{g,t}$. A quantity central to our analysis is the mean estimation error of the bias $\hat{\mW}_{g,n,h}$ at stage $(n,h)$ of the $g$-th task with respect to the true mean transition core $\Bar{\mM}$:
\begin{align}
    \epsilon_{g,n,h} (\RM) = \Vert \Bar{\mM} - \hat{\mW}_{g,n,h}\Vert_F^2 \;.
    \label{eq:mean_estimation_error_bias}
\end{align}
As the learner faces the tasks in a sequential manner there is an inherent estimation error, due to the fact that it has only samples from the task distribution. We define $\Bar{W}_{G,t} = \frac{1}{GT + t} \left(\sum_{g=1}^G T \mM_g + tM_{G+1} \right)$ the mean transition core of the observed MDP and denote the estimation error relative to the true mean transition core $\Bar{\mM}$ as $H_{\RM}(G+1,\Bar{\mM}) = \Vert \Bar{\mM} - \Bar{\mW}_{G,t} \Vert_F$.

\subsection{Averaging previous transition core estimates - a low bias estimator }
\label{subsection:buc-matrixrl-low_bias_estimator}
The first approach is to use an weighted average of previous transition core estimates as bias. The motivation is that the knowledge acquired in previous MDP $\gM_1$, \dots, $\gM_G$ is distilled in the respective final estimate of the transition core $\hat{\mW}_{G,T}$. Knowledge transfer between the tasks is achieved by aggregation of the individual estimated transition cores:
\begin{align}
\begin{split}
    \hat{\mW}_{G,n,h}  =  \sum_{g=1}^{G-1} \frac{T}{Z}\hat{\mM}_{g,T}  + \frac{nH+h}{Z} \hat{\mM}_{G,n,h},
    \label{eq:bias_estimator_weighted_average}
    \end{split}
\end{align}
with normalisation factor $Z = T(G-1) + nH + h$. 
This choice of bias estimator results in the following bound on the transfer regret. 
\begin{restatable}[]{theorem}{LowBiasMetaRegretBucMatrixRL}
BUC-MatrixRL incurs after $T$ interactions in $G$ previous tasks, using the bias estimator $\hat{\mW}_{G,n,h}$ as in equation \ref{eq:bias_estimator_weighted_average}, $\lambda = \frac{1}{T\text{Var}_{\hat{\mW}_{G,n,h}}}$ and under assumptions \ref{assumption:regular_features} with probability at least $1-1/(NH)$, at most the following meta transfer regret:
\begin{align*}
\begin{split}
    &\mathrm{Mtr}_T(\gM_{G+1}) \leq C C_\psi HdC_\psi' \\ &\sqrt{C_{\phi, \lambda}d'T \log\left( T + \frac{T^3C_\phi \left(Var_{\Bar{M}} + \epsilon_{G,T}(\RM)\right)}{d}\right)  }
\end{split}
\end{align*}
The mean estimation error can be upper bounded as:
\begin{align*}
    \sqrt{\epsilon_{G,T}(\RM)} \leq H_{\RM}(G+1, \Bar{\mM}) + \max_{g \in \left[G \right]} \frac{\beta^\m0_{g,T}(1/{NH})}{\lambda_{min}^{1/2} (V^\lambda_{g,T})}
\end{align*}
\label{thm:transfer_regret_low_bias_bias_estimator}
\end{restatable}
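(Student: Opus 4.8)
The plan is to deduce this from the oracle meta-transfer-regret bound, Theorem~\ref{thm:transfer_regret_frobenius_oracle} — more precisely from the $\lambda=1/(T\,\text{Var}_{\mW})$ instantiation discussed right after it — by taking the bias matrix to be the averaging estimator $\hat{\mW}_{G,n,h}$ of \eqref{eq:bias_estimator_weighted_average} instead of the inaccessible mean core $\Bar{\mM}$, and then bounding the variance $\text{Var}_{\hat{\mW}_{G,n,h}}(\gT)$ that this produces in terms of $\text{Var}_{\Bar{\mM}}(\gT)$ and the mean estimation error $\epsilon_{G,n,h}(\RM)$. The reason the oracle analysis transfers is that the test task $\gM_{G+1}$ is drawn from $\gT$ independently of all meta-training interactions, so conditionally on the meta-training data $\hat{\mW}_{G,n,h}$ is a deterministic matrix: the confidence-ellipsoid construction (Theorem~\ref{thm:frobenius_oracle_ellipsoid}) and the single-task regret bound (Theorem~\ref{thm:regret_biased_matrix_rl_self_normalising_frobenius}) hold verbatim with $\mW$ replaced by $\hat{\mW}_{G,n,h}$, and the outer expectation over $\gM_{G+1}\sim\gT$ reproduces Theorem~\ref{thm:transfer_regret_frobenius_oracle} with every occurrence of $\text{Var}_{\mW}$ and $\text{DVar}(\mW)$ replaced by its hatted counterpart, at the stated choice $\lambda=1/(T\,\text{Var}_{\hat{\mW}})$; a conditional high-probability event that holds with probability at least $1-1/(NH)$ for every fixing of the training data holds with at least that probability unconditionally.

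To bring that into the stated form I would use the bias--variance split $\Vert \mM - \hat{\mW}_{G,n,h}\Vert_F^2 \le 2\Vert \mM - \Bar{\mM}\Vert_F^2 + 2\Vert \Bar{\mM} - \hat{\mW}_{G,n,h}\Vert_F^2$; taking $\E_{\mM\sim\gT}$ and recalling \eqref{eq:mean_estimation_error_bias} gives $\text{Var}_{\hat{\mW}_{G,n,h}}(\gT)\le 2\,\text{Var}_{\Bar{\mM}}(\gT)+2\,\epsilon_{G,n,h}(\RM)$. Substituting into $\text{DVar}(\hat{\mW})=1+\tfrac{T^2 C_\phi\,\text{Var}_{\hat{\mW}}}{d}$, bounding $\epsilon_{G,n,h}$ over the test episode by its end-of-episode value $\epsilon_{G,T}(\RM)$ (the per-episode biases differ only through lower-order within-task terms), and absorbing absolute constants into $C$, the logarithmic factor $\log(T\,\text{DVar}(\hat{\mW}))$ from the oracle bound becomes $\log\!\big(T+\tfrac{T^3 C_\phi(\text{Var}_{\Bar{\mM}}+\epsilon_{G,T}(\RM))}{d}\big)$; collecting the surviving prefactor $C\,C_\psi H\, d\, C_\psi'\sqrt{C_{\phi,\lambda}\,d' T}$ yields the first displayed inequality.

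For the bound on $\sqrt{\epsilon_{G,T}(\RM)}=\Vert \Bar{\mM}-\hat{\mW}_{G,T}\Vert_F$ I would split through the empirical mean core $\Bar{\mW}_{G,T}$: $\Vert \Bar{\mM}-\hat{\mW}_{G,T}\Vert_F \le \Vert \Bar{\mM}-\Bar{\mW}_{G,T}\Vert_F + \Vert \Bar{\mW}_{G,T}-\hat{\mW}_{G,T}\Vert_F$, where the first term is $H_{\RM}(G+1,\Bar{\mM})$ by definition. The difference $\Bar{\mW}_{G,T}-\hat{\mW}_{G,T}$ is the same convex combination of the per-task errors $M_g-\hat{\mM}_{g,T}$ that $\Bar{\mW}_{G,T}$ and $\hat{\mW}_{G,T}$ are of the true cores and their estimates, so the triangle inequality bounds it by $\max_g\Vert M_g-\hat{\mM}_{g,T}\Vert_F$. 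Each per-task error is controlled by the confidence ellipsoid of Theorem~\ref{thm:frobenius_oracle_ellipsoid} applied to training task $g$: passing from the $V^\lambda_{g,T}$-weighted confidence set underlying the self-normalised argument to the Frobenius norm costs a factor $\lambda_{min}^{-1/2}(V^\lambda_{g,T})$, and bounding the corresponding radius by $\beta^\m0_{g,T}(1/(NH))$ gives $\max_{g\in[G]}\beta^\m0_{g,T}(1/(NH))/\lambda_{min}^{1/2}(V^\lambda_{g,T})$; a union bound over the $G$ training-task events and the single test-task event keeps the overall failure probability at $1/(NH)$.

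I expect the main obstacle to be keeping the per-task confidence radii under control, since the estimates $\hat{\mM}_{g,T}$ that feed the averaging are themselves produced by biased runs: Theorem~\ref{thm:frobenius_oracle_ellipsoid} gives a radius with a $\sqrt{\lambda}\,\Vert\hat{\mW}_{g-1}-M_g\Vert_F$ term, so one must argue through $\Vert\hat{\mW}_{g-1}-M_g\Vert_F\le\Vert\hat{\mW}_{g-1}-\Bar{\mM}\Vert_F+\Vert\Bar{\mM}-M_g\Vert_F$ that this stays bounded by $\beta^\m0_{g,T}$ up to constants, and one must check that substituting the random quantity $\text{Var}_{\hat{\mW}}$ into the regularisation strength does not spoil the high-probability guarantee — both of which are handled by the conditioning argument of the first step. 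The remainder — the log-determinant step of Lemma~\ref{lemma:upper_bound_doubeling} and the bookkeeping of $C_{\phi,\lambda}$, $d$, $d'$, $H$ — is identical to the oracle case and requires no new ideas.
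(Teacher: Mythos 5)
Your proposal is correct and follows essentially the same route as the paper: instantiate the oracle bound of Theorem~\ref{thm:transfer_regret_frobenius_oracle} at $\lambda = 1/(T\,\mathrm{Var}_{\hat{\mW}})$ with the averaging estimator as bias, control $\mathrm{Var}_{\hat{\mW}}$ by $\mathrm{Var}_{\Bar{\mM}}$ plus $\epsilon_{G,T}$ (your squared triangle inequality versus the paper's triangle inequality on the square roots differ only by the same absorbed constant), and bound $\sqrt{\epsilon_{G,T}}$ by splitting through the empirical mean core $\Bar{\mW}_{G,T}$ and converting each per-task confidence ellipsoid to a Frobenius bound via $\lambda_{\min}^{-1/2}(\mV^\lambda_{g,T})$. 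Your flagged concern about the per-task radii coming from biased runs is a real looseness that the paper's own proof also glosses over rather than resolves.
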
 %

\vspace{-1em}
The proof is based on theorem \ref{thm:transfer_regret_frobenius_oracle} and deferred to appendix \ref{sec:app:low_bias_transfer_regret_proof}. 
Analysing the bound on the noise we show that, by a Frobenius matrix norm version of Bennetts inequality, the first term goes for an increasing number of tasks to zero. This means the estimation error is dominated by the second term. Recall our choice of $\lambda = 1/(TVar_{\hat{\mW}})$, we see that the estimation error increases with the variance of our bias matrix estimator. The meta learning procedure comes with an additional storage requirement of the size of the transition core in which the estimates of the training tasks are averaged. 

\subsection{Global ridge regression - a high bias estimator}
\label{subsection:buc-matrixrl-high_bias_estimator}
The previous estimator shared knowledge between MDP's $\gM_g$ via the final estimated transition cores. Here we present an estimator that instead builds features on all transitions seen in all previous MDP and performs one global ridge regression to estimate the transition core matrix which we use as bias in biased UC-MatrixRL. This approach is inspired by the high bias estimator in \citep{pmlr-v119-cella20a} and is in line with previous estimators in the multitask bandit literature. The knowledge transfer between tasks works thus in form of feature embeddings of the observed $(s,a,s')$ transitions instead of aggregated objects. Let $\Tilde{\mV}_{G,n,h} \coloneqq \sum_{g=1}^G\mV_{G,T} + \mV_{G+1,n,h}$. Global ridge regression uses the estimator:
\vspace{-0.1cm}
\begin{align}
\begin{split}
    \hat{\mW}_{G,n,h} = & (\Tilde{\mV}_{G,n,h}^\lambda)^{-1} \Bigg[\sum_{g=1}^{G-1} \sum_{n,h}^{N,H} \phi_{g,n,h}\psi_{g,n,h}\mK_\psi^{-1} \\ &+ \sum_{n',h'}^{n,h}  \phi_{G,n',h'}\psi_{G,n',h'}\mK_\psi^{-1}\Bigg]\;.
     \vspace{-0.5cm}
    \label{eq:bias_estimator_global_regression}
    \end{split}
\end{align}
This yields the following meta transfer regret: \begin{restatable}[]{theorem}{HighBiasMetaRegretBucMatrixRL} BUC-MatrixRL incurs after $T$ interactions in $G$ previous tasks, using the bias estimator $\hat{\mW}_{G,n,h}$ as in equation \ref{eq:bias_estimator_global_regression}, $\lambda = \frac{1}{T\text{Var}_{\hat{\mW}_{G,n,h}}}$ and assumptions \ref{assumption:regular_features} with probability at least $1-1/(NH)$, at most the following meta transfer regret:
\begin{align*}
\begin{split}
        &\mathrm{Mtr}_T(\gM_{G+1}) \leq C C_\psi HdC_\psi' \\& \sqrt{C_{\phi, \lambda}d'T \log\left( T + \frac{T^3C_\phi \left(Var_{\Bar{M}} + \epsilon_{G,T}(\gT)\right)}{d}\right)  }
        \end{split}
\end{align*}
    \vspace{-0.5cm}

Let $\nu_{\min} = \lambda_{\min}\left( \Tilde{\mV}_{G,n,h}\right)$ be the minimal singular value of the global feature matrix. Then the mean estimation error can be upper bounded as:
\begin{align*}
\begin{split}
    &\sqrt{\epsilon_{G,T}(\RM)} 
    \leq H_{\RM}(G+1, \Bar{\mM})  \\
        & + \underbrace{\frac{d C_M}{\lambda + \nu_{\min}} + C_\psi' \sqrt{ \frac{2}{\lambda + \nu_{\min}}\log\left(NH + \frac{GN^2H^2C_\phi}{\lambda d}\right)}}_{\frac{\beta^\m0(1/(GNH))}{\lambda + \nu_{\min}}}\\
   & + 2(G+1)\max_{g \in \left[G+1 \right]} \Tilde{H}(G+1, \mM_g)
\;.
    \end{split}
\end{align*}
$\Tilde{H}(G+1, \mM_g)$ is a weighted version of the estimation error $H_{\RM}(G+1, \Bar{\mM})$ towards the current transition core $\mM_g$:
\begin{align}
    \Tilde{H}(G,\mM_g) = H_{\RM}(g, \mM_g) \sigma_{\max}\left(\mV_{g,T}\Tilde{\mV}_{G,N,H}^{-1} \right) \;,
\end{align}
where $\sigma_{\max}\left(\mV_{g,T}\Tilde{\mV}_{G,N,H}^{-1} \right)$ quantifies the misalignment of task $g$ to the other tasks observed so far. 
\label{thm:transfer_regret_high_bias_bias_estimator}
\end{restatable}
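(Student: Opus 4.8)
The argument divides into two parts. Part~(i), converting any bound on the bias estimation error $\epsilon_{G,T}(\gT)$ of \eqref{eq:mean_estimation_error_bias} into the stated meta transfer regret, is verbatim the step already used for the low-bias estimator in Theorem~\ref{thm:transfer_regret_low_bias_bias_estimator}: I would invoke Theorem~\ref{thm:transfer_regret_frobenius_oracle} with bias matrix $\mW=\hat{\mW}_{G,n,h}$ and $\lambda=1/(T\,\mathrm{Var}_{\hat{\mW}})$ (case~2 of the discussion following that theorem; since $\gM_{G+1}\sim\gT$, $\mathrm{Mtr}_T(\gM_{G+1})$ is the oracle transfer regret $\mathrm{Mtr}_T(\gT)$), and then use the bias--variance identity $\mathrm{Var}_\mW(\gT)=\mathrm{Var}_{\Bar{\mM}}(\gT)+\|\Bar{\mM}-\mW\|_F^2$ (valid because $\Bar{\mM}$ is the mean of $\gT$), which with \eqref{eq:mean_estimation_error_bias} gives $\mathrm{Var}_{\hat{\mW}}=\mathrm{Var}_{\Bar{\mM}}+\epsilon_{G,n,h}(\gT)$. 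Substituting into $\mathrm{DVar}(\hat{\mW})=1+T^2\mathrm{Var}_{\hat{\mW}}C_\phi/d$ and merging the two square-root factors of Theorem~\ref{thm:transfer_regret_frobenius_oracle} reproduces the displayed bound. One technicality to dispatch: $\hat{\mW}_{G,n,h}$ is refreshed during the test episode, so the fed bias is time-varying; since the confidence radius of Theorem~\ref{thm:frobenius_oracle_ellipsoid} is monotone in $\|\mW-\mM^*\|_F$ and the error only decreases as more test transitions enter the pool, it suffices to bound $\epsilon_{G,n,h}$ by its starting value $\epsilon_{G,T}(\gT)$, and the data-dependence of $\lambda$ is harmless.

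\textbf{Part~(ii): bounding $\epsilon_{G,T}(\gT)=\|\Bar{\mM}-\hat{\mW}_{G,T}\|_F^2$.} I would introduce the sample-weighted mean of the \emph{observed} cores $\Bar{\mW}_{G,t}$ and split, by the triangle inequality, $\|\Bar{\mM}-\hat{\mW}_{G,T}\|_F\le\|\Bar{\mM}-\Bar{\mW}_{G,T}\|_F+\|\Bar{\mW}_{G,T}-\hat{\mW}_{G,T}\|_F$. The first summand is $H_{\RM}(G+1,\Bar{\mM})$ by definition (its decay in $G$ would follow from a Frobenius-norm matrix Bennett inequality on i.i.d.\ core draws, but is not needed for this statement). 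For the second summand, substitute the explicit estimator \eqref{eq:bias_estimator_global_regression} and replace every observed target $\psi_{g,n,h}^T\mK_\psi^{-1}$ by its conditional mean $\phi_{g,n,h}^T\mM_g$, using $\E[\phi\psi^T\mK_\psi^{-1}\mid s,a]=\phi\phi^T\mM^*$; this writes $\hat{\mW}_{G,T}$ as a ``noiseless'' pooled estimator $(\tilde{\mV}^\lambda_{G,T})^{-1}\big(\sum_g\mV_{g,T}\mM_g+\text{(current-task partial)}\big)$ plus a self-normalised martingale remainder. The remainder is bounded by rerunning the proof of Theorem~\ref{thm:frobenius_oracle_ellipsoid} over all $GNH$ pooled transitions with a union bound at level $1/(GNH)$, then dividing by $\lambda_{\min}(\tilde{\mV}^\lambda_{G,T})=\lambda+\nu_{\min}$ to pass from the $\tilde{\mV}^\lambda$-norm to the Frobenius norm; together with the regularisation-induced term, controlled by item~4 of Assumption~\ref{assumption:regular_features} (up to the identification of the constant $C_M$), this is exactly the bracketed $\beta^{\m0}(1/(GNH))/(\lambda+\nu_{\min})$.

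It remains to compare the noiseless pooled estimator with $\Bar{\mW}_{G,T}$. Writing $\mM_g=\Bar{\mW}_{G,T}+(\mM_g-\Bar{\mW}_{G,T})$ inside it, the ``mean part'' equals $\Bar{\mW}_{G,T}$ up to a shrinkage $\lambda(\tilde{\mV}^\lambda_{G,T})^{-1}\Bar{\mW}_{G,T}$ (again absorbed into the ridge term via item~4 of Assumption~\ref{assumption:regular_features}), while the residual $\sum_g(\tilde{\mV}^\lambda_{G,T})^{-1}\mV_{g,T}(\mM_g-\Bar{\mW}_{G,T})$ has Frobenius norm at most $\sum_g\sigma_{\max}(\mV_{g,T}\tilde{\mV}_{G,N,H}^{-1})\,\|\mM_g-\Bar{\mW}_{G,T}\|_F=\sum_{g\le G+1}\tilde{H}(G+1,\mM_g)\le(G+1)\max_g\tilde{H}(G+1,\mM_g)$, with the extra factor $2$ accounting for treating the $G$ completed tasks and the partially observed test task separately. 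Collecting the three contributions gives the stated bound on $\sqrt{\epsilon_{G,T}(\RM)}$, and plugging it back into Part~(i) yields the transfer regret.

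\textbf{Main obstacle.} I expect the crux to be this last comparison: pooling heterogeneous tasks makes $\hat{\mW}_{G,T}$ a \emph{design-matrix-weighted} combination of the $\mM_g$, rather than the sample-count-weighted average $\Bar{\mW}_{G,T}$ entering $H_{\RM}$, and the gap is governed exactly by the operator norms $\sigma_{\max}(\mV_{g,T}\tilde{\mV}_{G,N,H}^{-1})$ that quantify how misaligned each task's exploration is with the pool. Arranging the book-keeping so those factors surface cleanly --- and so the still-growing test-task design matrix is handled uniformly with the completed tasks --- is the delicate point; the pooled self-normalised concentration, by contrast, is a routine adaptation of the single-task confidence-ellipsoid argument (Theorem~\ref{thm:frobenius_oracle_ellipsoid}).
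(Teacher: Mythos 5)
Your proposal matches the paper's proof in all essentials: part (i) is the same reduction to Theorem~\ref{thm:transfer_regret_frobenius_oracle} via the decomposition $\mathrm{Var}_{\hat{\mW}}\le \mathrm{Var}_{\Bar{\mM}}+\epsilon_{G,T}$, and part (ii) reproduces the paper's split of $\Bar{\mM}-\hat{\mW}^\lambda_{G,T}$ through the noiseless pooled estimator $\Bar{\mW}'_{G,T}=(\Tilde{\mV}^\lambda_{G,T})^{-1}\sum_g\mV_{g,T}\mM_g$, with the pooled self-normalised martingale bound divided by $\lambda+\nu_{\min}$ and the $\sigma_{\max}(\mV_{g,T}\Tilde{\mV}^{-1}_{G,T})$ misalignment terms. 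The only cosmetic discrepancy is the origin of the factor $2$ in $2(G+1)\max_g\Tilde{H}$: in the paper it arises because $\Vert\Bar{\mW}'-\Bar{\mW}\Vert_F$ is paid twice (once in the estimation-bias lemma and once when bounding $\Vert\Bar{\mW}'\Vert_F$ inside the ridge-shrinkage term), not from splitting completed versus partial tasks, but this only affects constants and not the validity of the argument.
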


The regret bound follows from theorem \ref{thm:transfer_regret_frobenius_oracle} and is derived in appendix \ref{sec:app:high_bias_transfer_regret_proof}. Comparing this to the low bias solution in theorem \ref{thm:transfer_regret_low_bias_bias_estimator} we see that the variance is now $\frac{\beta^\m0(1/(GNH))}{\lambda + \nu_{\min}}$ instead of $ \frac{\beta^\m0_{g,T}(1/{NH})}{\lambda_{min}^{1/2}( V^\lambda_{g,T})}$. From observing, $\nu_{min} \geq \frac{G}{dd'}\lambda_{min}(\mV_g)\; \forall g \in \left[ G\right]$ it follows that we shrink the variance by a factor $(dd')/G$. As the number of training tasks $G$ goes to infinity, the variance goes to zero. This comes however at the price of increased bias  $2(G+1)\max_{g \in \left[G+1 \right]} \Tilde{H}(G+1, \mM_g)$ which increases proportional to the task misalignment, $\sigma_{\max}\left(V_{g,T}\Tilde{V}_{G,N,H}^{-1} \right)$. We illustrate its behaviour on two corner cases. 

First, assume that the task distribution $\gT = \{\mM\}$, so our task distribution, consists of a transition single core matrix. As we face always the same task, we expect extremely favourable transfer. In particular, we suffer in this case no bias as $\Tilde{H}(G+1, M_g)  = 0$. 

The other extreme are completely unrelated tasks. The minimal relatedness we can generate is if corresponding vectors of the transition cores are orthogonal to each other. Consider the task distribution, where each column of the transition cores is a $d$-dimensional basis vector. In this case one can do meta learning on $d-1$ mutually different training tasks without seeing an improved meta transfer regret on the $d$-th task. 

Note that the linear MDP models requires feature maps $\phi$ and $\psi$ as input. The transition cores live in a feature map dependent space, making it hard to connect properties of the transition cores of a task family to interpretable properties of the environments. 

\subsection{ITRL vs MTRL}

We can now compare the gain of doing meta learning compared to independent task learning. In the independent learner scenario, the transfer regret depends on the variance of the tasks relative to the origin, $\textup{Var}_{\m0}$. Doing meta learning allows us to obtain estimators that replace the variance with respect to the origin by the variance to the mean task $\textup{Var}_{\Bar{\mM}}$ plus an additional error term. In the oracle case, the error term is zero. Since in practice there is no oracle, we bound the additional error term for two different bias estimation methods. For any task distribution of small variance but large offset, the obtained transfer regret incurred by meta learning is lower than from ITRL.

\section{Discussion}

Our paper gives an affirmative answer to the initial question of the usefulness of meta RL. Using the one-to-one correspondence of MDP with linear transition core and the core matrix, we have a notion of distance between tasks. This allows the characterisation of any distribution of linear transition core MDPs via its offset and variance. We prove a decrease in transfer regret of meta RL compared to independent task learning whenever the variance of the task distribution is small compared to the offset from the origin. 
While we show this improvement first in a setting with access to an oracle that reveals the offset of the transition core distribution, we extend the result to two practical estimators. 

One estimator performs knowledge transfer between the tasks using an aggregation of distilled knowledge obtained on previous tasks in the form of estimated transition cores. This method suffers, however, a possibly large error due to the direct proportionality of the estimation error, thus transfer regret with the variance of the transition core estimator.  The second proposed estimator suffers bias proportional to the task misalignment in a trade-off for a variance that goes to zero as the number of tasks goes to infinity. Here, the transfer of knowledge happens via the sharing of embeddings of the observed transitions. Note that similar meta learning via learning a bias estimators can be combined with any (provably efficient) RL algorithm based on linear regression for similar gains compared to ITRL.

Our work chooses meta transfer regret at performance measure, which weights all test tasks equally. An interesting avenue of future work is to analyse alternative objectives, for example the worst case transfer regret within the test distribution.

A major limitation of the framework of linear transition core MDP is the assumption of known feature maps $\phi$ and $\psi$. This allows studying the usefulness of meta learning for rapid learning in a newly encountered task. In empirically successful meta deep RL works, the feature embeddings are, however, not given but instead learned. Thus, the effectiveness of meta learning could lie within learning a good initialisation/bias or in learning a set of reusable features. For the case of model agnostic meta learning~\citep{finn2017model} the empirical study~\citep{Raghu2019RapidLO} finds that feature reuse is the dominant factor in the examined few shot classification and RL tasks. To fully understand the usefulness of meta learning in general MDP's, it is thus necessary to also take feature learning into account.

We lastly wish to emphasize that each linear MDP can be trivially embedded as its generating vector/matrix. The space of vector-/ matrix norms gives immediately rise to distances between tasks. It remains however an open question how to embed general MDP and to develop distance metrics between the task embeddings that are meaningful, for example in the sense that learning in nearby tasks aids learning in target tasks.  
\newpage
\bibliographystyle{apalike}
\bibliography{references}
\clearpage
\onecolumn

\appendix
\newcommand{\appendixTitle}{%
\vbox{
    \centering
	\hrule height 4pt
	\vskip 0.2in
	{\LARGE \bf SUPPLEMENTARY MATERIAL}
	\vskip 0.2in
	\hrule height 1pt 
}}

\appendixTitle

\section*{Organization of Supplementary Material}
\begin{itemize}
   
    \item[\ref{sec:app:notation}] \nameref{sec:app:notation}
     
    \item[\ref{sec:app:useful_lemmas}] \nameref{sec:app:useful_lemmas}

    \item[\ref{sec:app:analysis_regret_matrixRL}] \nameref{sec:app:analysis_regret_matrixRL}
    \begin{itemize}      \item[\ref{subsec:app:confidence_set_construction}] \nameref{subsec:app:confidence_set_construction}
                \item[\ref{subsec:app:uc_matrixrl_regret}] \nameref{subsec:app:uc_matrixrl_regret}
                \item[\ref{subsec:app:meta_transfer_regret}] \nameref{subsec:app:meta_transfer_regret}
    \end{itemize}

    \item[\ref{sec:app:low_bias_transfer_regret_proof}] \nameref{sec:app:low_bias_transfer_regret_proof}

    \item[\ref{sec:app:high_bias_transfer_regret_proof}] \nameref{sec:app:high_bias_transfer_regret_proof}

\end{itemize}

\section{Notation}
\label{sec:app:notation}
Let $\vx \in \R^d$ and $\mA \in R^{d \times d}$ a positive definite matrix. We define the Mahalanobis norm $\Vert \vx \Vert_{\mA} = \sqrt{\vx^T\mA\vx}$.
For a matrix $\mX \in \R^{d \times d'}$ we have the Frobenius norm and the inducing matrix inner product norm $\Vert \mX \Vert_F = \sqrt{tr (\mX^T \mX)} = \sqrt{ \langle \mX,\mX \rangle_F} = \sqrt{\sum_{i,j} |\mX_{i,j} |^2}$. We have further the Mahalanobis version of the Frobenius norm for a symmetric positive definite $\mA \in R^{d \times d}$:
\begin{align}
    \Vert \mA^{1/2} \mX \Vert_F^2 = \tr(\mX^T (\mA^{1/2})^T \mA^{1/2} \mX)= \tr(\mX^T\mA\mX) = \sum_j \Vert \mX_j\Vert_{\mA} = \Vert \mX \Vert_{\mA}^2
    \label{eq:matrix_malahonis_frobenius}
\end{align}

We denote the column indices of $\mX$ as $j \in \{1, \dots, d' \}$ and row indices $i \in \{1, \dots, d \}$. We denote the 2-1 matrix norm, which is the sum of the euclidean norms of the matrix columns:
$\Vert \mX\Vert_{2,1} = \sum_j \Vert \mX_j\Vert_2 = \sum_j \langle \mX_j, \mX_j \rangle= \sum_j \sqrt{\sum_i \mX_{i,j}^2} $. Similar we can define the $\mA-1$ norm, which is the sum of the Mahalanobis norms of the individual columns: 
$\Vert \mX \Vert_{\mA,1} = \sum_j \Vert \mX_j\Vert_{\mA} = \sum_j \sqrt{ \mX_j^T \mA \mX_j }$. We have further:
\begin{align*}
    \Vert \mA^{1/2}\mX\Vert_{2,1} =
    \sum_j \Vert \mA^{1/2}\mX_j\Vert_2 = \sum_j \sqrt{\mX_j^T  \mA \mX_j} = \sum_j \Vert \mX_j \Vert_{\mA} = \Vert \mX \Vert_{\mA,1}\;.
\end{align*}
We will also use the $2-\infty$ norm of a matrix $\Vert \mX \Vert_{2, \infty} = \max_j \Vert \mX_j \Vert_2$

\section{Useful lemmas}
\label{sec:app:useful_lemmas}
Throughout the proof we will need at different locations the elliptical potential lemma:

\begin{lemma}[Lemma 19.4 in~\citep{lattimore_szepesvari_2020}]
Let $\mV_0 \in \R^{d,d}$ positive definitive and $\phi_1,\dots, \phi_h \in \R^d$ a sequence of vectors with $\Vert \phi_t \Vert_2^2 \leq C_\phi^2 \;\; \forall t\in \left[h\right]$, $\mV_h = V + \sum_{t \leq h} \phi_t^T \phi_t$. Then:

\begin{align}
    \sum_{t=1}^h \min(1, \Vert \phi_t \Vert_{V_t^{-1}}^2) 
    \leq 2 \log \left(\frac{\det \mV_h}{\det \mV_0} \right)
    \leq 2d \log\left(\frac{tr \mV_0 + h L^2}{d \det \mV_0^{1/d}} \right)
\end{align}
\label{app:lemma:eppliptical_potential_bandit}
\end{lemma}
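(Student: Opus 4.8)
The plan is to prove this standard elliptical potential lemma by the classical determinant-telescoping argument, combined with one elementary scalar inequality and an AM-GM bound on the determinant. All three ingredients are self-contained, so no earlier result from the excerpt is needed.

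First I would record the rank-one update $\mV_t = \mV_{t-1} + \phi_t\phi_t^T$ and apply the matrix determinant lemma to obtain $\det \mV_t = \det \mV_{t-1}\,(1 + \Vert\phi_t\Vert_{\mV_{t-1}^{-1}}^2)$. Telescoping the product over $t = 1,\dots,h$ yields $\det\mV_h/\det\mV_0 = \prod_{t=1}^h (1 + \Vert\phi_t\Vert_{\mV_{t-1}^{-1}}^2)$, hence $\log(\det\mV_h/\det\mV_0) = \sum_{t=1}^h \log(1 + \Vert\phi_t\Vert_{\mV_{t-1}^{-1}}^2)$. This is the quantity the right-hand side of the first inequality controls.

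Second, I would bridge the $\mV_t^{-1}$ norm appearing on the left of the statement with the $\mV_{t-1}^{-1}$ norm produced by the recursion. Since $\mV_t \succeq \mV_{t-1}$ we have $\mV_t^{-1} \preceq \mV_{t-1}^{-1}$, so $\Vert\phi_t\Vert_{\mV_t^{-1}}^2 \le \Vert\phi_t\Vert_{\mV_{t-1}^{-1}}^2$; alternatively Sherman--Morrison gives the exact identity $\Vert\phi_t\Vert_{\mV_t^{-1}}^2 = \Vert\phi_t\Vert_{\mV_{t-1}^{-1}}^2/(1+\Vert\phi_t\Vert_{\mV_{t-1}^{-1}}^2)$, which is automatically at most $1$ (so the clamp $\min(1,\cdot)$ is never active for the $\mV_t^{-1}$ version). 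Combining this with the elementary inequality $\min(1,y) \le 2\log(1+y)$ valid for all $y \ge 0$ -- verified separately on $[0,1]$, where $\log(1+y) \ge y/2$, and on $[1,\infty)$, where $\log(1+y) \ge \log 2 \ge 1/2$ -- and applying it termwise with $y = \Vert\phi_t\Vert_{\mV_{t-1}^{-1}}^2$, then summing, produces the first inequality $\sum_{t=1}^h \min(1,\Vert\phi_t\Vert_{\mV_t^{-1}}^2) \le 2\log(\det\mV_h/\det\mV_0)$.

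Third, for the second inequality I would bound $\det\mV_h$ above by AM-GM on its eigenvalues, $\det\mV_h \le (\tr\mV_h/d)^d$, and control the trace via $\tr\mV_h = \tr\mV_0 + \sum_{t\le h}\Vert\phi_t\Vert_2^2 \le \tr\mV_0 + hL^2$ with $L = C_\phi$. Taking logarithms gives $\log\det\mV_h \le d\log\!\big((\tr\mV_0 + hL^2)/d\big)$, and rewriting $\log\det\mV_0 = d\log\det\mV_0^{1/d}$ lets me collect the difference as $d\log\frac{\tr\mV_0 + hL^2}{d\,\det\mV_0^{1/d}}$; multiplying through by $2$ closes the chain. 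The argument is essentially routine bookkeeping: the only genuinely delicate points are keeping straight the $\mV_t^{-1}$ versus $\mV_{t-1}^{-1}$ distinction, which the monotonicity $\mV_t^{-1}\preceq\mV_{t-1}^{-1}$ resolves, and checking the scalar inequality on its two regimes. I expect no serious obstacle beyond these.
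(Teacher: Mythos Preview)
Your proposal is correct and follows exactly the standard textbook argument: matrix determinant lemma plus telescoping for the first inequality, the scalar bound $\min(1,y)\le 2\log(1+y)$, and AM--GM on the eigenvalues of $\mV_h$ together with the trace bound for the second inequality. Note that the paper does not supply its own proof of this lemma at all---it is simply quoted as Lemma~19.4 of \citet{lattimore_szepesvari_2020}---so there is nothing to compare against beyond observing that your argument is the one given in that reference. Your handling of the $\mV_t^{-1}$ versus $\mV_{t-1}^{-1}$ discrepancy (the paper's statement uses $\mV_t^{-1}$, whereas the determinant recursion naturally produces $\mV_{t-1}^{-1}$) via the monotonicity $\mV_t^{-1}\preceq \mV_{t-1}^{-1}$ is the right fix and even slightly sharpens what is needed.
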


Recalling our feature regularity assumption $\Vert \phi(s,a) \Vert_2^2 \leq C_\phi$ we get:
\begin{align}
    2 \log \left(\frac{\det \mV_h}{\det \mV_0} \right) \leq 
    2d \log\left(\frac{\lambda d + h C_\phi}{d \lambda} \right) = 2d\log\left(1 + \frac{HC_\phi}{\lambda d} \right) \;.
    \label{eq:our_elliptical_potential_lemma}
\end{align}

Note that ~\citep{yang2019reinforcement}(lemma 10) would get here by virtue of choosing $\Vert \phi(s,a) \Vert_2^2 \leq dC_\phi$ and $\lambda = 1$:
\begin{align}
    2 \log \left(\frac{\det \mV_h}{\det \mV_0} \right) \leq 
    2d \log\left(\frac{\lambda d + hd C_\phi}{d \lambda} \right) = 2d\log\left(1 + hC_\phi \right) \;.
\end{align}

\begin{lemma}\label{lemma::supporting_lin_alg_result}
If $\mathbf{B} \succeq \mathbf{C} \succ \mathbf{0}$ be $d\times d$ dimensional matrices then,
\begin{equation*}
    \sup_{\mathbf{x}\neq 0}\frac{\mathbf{x}^\top \mathbf{B} \mathbf{x} }{ \mathbf{x}^\top \mathbf{C} \mathbf{x} } \leq \frac{\mathrm{det}( \mathbf{B}) }{\mathrm{det}( \mathbf{C})}.
\end{equation*}
\end{lemma}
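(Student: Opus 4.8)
The plan is to recognize the left-hand side as the top eigenvalue of the matrix pencil $(\mathbf{B},\mathbf{C})$ and bound it by the determinant ratio using the fact that all the pencil's eigenvalues are at least $1$. Concretely, since $\mathbf{C}\succ\mathbf{0}$ its symmetric square root $\mathbf{C}^{1/2}$ is invertible, so the substitution $\mathbf{y}=\mathbf{C}^{1/2}\mathbf{x}$ is a bijection on $\R^d\setminus\{0\}$ and
\begin{equation*}
    \sup_{\mathbf{x}\neq 0}\frac{\mathbf{x}^\top\mathbf{B}\mathbf{x}}{\mathbf{x}^\top\mathbf{C}\mathbf{x}}
    = \sup_{\mathbf{y}\neq 0}\frac{\mathbf{y}^\top\mathbf{M}\mathbf{y}}{\mathbf{y}^\top\mathbf{y}}
    = \lambda_{\max}(\mathbf{M}),
    \qquad \mathbf{M}:=\mathbf{C}^{-1/2}\mathbf{B}\mathbf{C}^{-1/2},
\end{equation*}
where the last equality is the Rayleigh–Ritz characterization of the largest eigenvalue of the symmetric matrix $\mathbf{M}$.

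Next I would use the hypothesis $\mathbf{B}\succeq\mathbf{C}$. Conjugating by $\mathbf{C}^{-1/2}$ preserves the Loewner order, so $\mathbf{M}=\mathbf{C}^{-1/2}\mathbf{B}\mathbf{C}^{-1/2}\succeq\mathbf{C}^{-1/2}\mathbf{C}\mathbf{C}^{-1/2}=\mathbf{I}$. Hence, writing the eigenvalues of $\mathbf{M}$ as $\mu_1\geq\mu_2\geq\cdots\geq\mu_d$, we have $\mu_i\geq 1$ for every $i$. In particular each $\mu_i\geq 1>0$, so multiplying the largest one by the remaining nonnegative (indeed $\geq 1$) factors can only increase it:
\begin{equation*}
    \lambda_{\max}(\mathbf{M}) = \mu_1 \leq \mu_1\prod_{i=2}^{d}\mu_i = \prod_{i=1}^{d}\mu_i = \det(\mathbf{M}).
\end{equation*}
Finally, $\det(\mathbf{M})=\det(\mathbf{C}^{-1/2})\det(\mathbf{B})\det(\mathbf{C}^{-1/2})=\det(\mathbf{B})/\det(\mathbf{C})$, which combined with the displays above gives the claim.

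There is essentially no hard step here; the only point requiring a little care is the last inequality, where one must genuinely invoke $\mathbf{M}\succeq\mathbf{I}$ (not merely $\mathbf{M}\succ\mathbf{0}$) so that each discarded factor $\mu_i$ is at least $1$ rather than just positive — without the $\mathbf{B}\succeq\mathbf{C}$ assumption the statement is false. Everything else (invertibility of $\mathbf{C}^{1/2}$, monotonicity of conjugation with respect to $\succeq$, the Rayleigh quotient identity, multiplicativity of the determinant) is standard linear algebra and needs no further justification.
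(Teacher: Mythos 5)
Your argument is correct and is essentially identical to the paper's own proof: the same change of variables $\mathbf{y}=\mathbf{C}^{1/2}\mathbf{x}$ reducing the supremum to the top eigenvalue of $\mathbf{C}^{-1/2}\mathbf{B}\mathbf{C}^{-1/2}$, the same observation that $\mathbf{B}\succeq\mathbf{C}$ forces all eigenvalues of this matrix to be at least $1$ so that the largest is bounded by their product, and the same determinant identity to finish. No gaps.
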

\begin{proof}
Given any $\mathbf{y} \in \mathbb{R}^d$ let $\mathbf{x} = \mathbf{C}^{-1/2}\mathbf{y}$. Then
\begin{align*}
    \sup_{\mathbf{x} \neq 0} \frac{\mathbf{x}^\top \mathbf{B} \mathbf{x}}{\mathbf{x}^\top \mathbf{C} \mathbf{x}}=    \sup_{\mathbf{y} \neq 0} \frac{\mathbf{y}^\top \mathbf{C}^{-1/2}\mathbf{B}\mathbf{C}^{-1/2} \mathbf{y}}{\lv \mathbf{y} \rv_2^2} = \norm{\mathbf{C}^{-1/2} \mathbf{B} \mathbf{C}^{-1/2}}_{op}
\end{align*}
by the definition of the operator norm.  Recall that by assumption $\mathbf{B}-\mathbf{C}\succeq 0$ therefore $\mathbf{C}^{-1/2}\mathbf{B}\mathbf{C}^{-1/2}-\mathbf{I}\succeq 0$, and hence all the eigenvalues of $\mathbf{C}^{-1/2} \mathbf{B} \mathbf{C}^{-1/2}$ are at least $1$. Thus 
\begin{align*}
     \sup_{\mathbf{x} \neq 0} \frac{\mathbf{x}^\top \mathbf{B} \mathbf{x}}{\mathbf{x}^\top \mathbf{C} \mathbf{x}} \le \norm{\mathbf{C}^{-1/2} \mathbf{B} \mathbf{C}^{-1/2}}_{op}\le \det(\mathbf{C}^{-1/2} \mathbf{B} \mathbf{C}^{-1/2}) = \frac{\mathrm{det}( \mathbf{B}) }{\mathrm{det}( \mathbf{C})},
\end{align*}
where the last equality follows since $\frac{\det(\mathbf{B})}{\det(\mathbf{C})} = \det(\mathbf{C}^{-1/2})\det(\mathbf{B})\det(\mathbf{C}^{-1/2})= \det(\mathbf{C}^{-1/2} \mathbf{B} \mathbf{C}^{-1/2})$. This completes the proof.
\end{proof}

Recall that $V_{n,h} = V_n + \sum_{h'<h} \phi_{n,h'}\phi_{n,h'}^\top$. We are now ready to proof the doubling log-determinant lemma.

\logDetLemma*
\begin{proof}[Proof of Lemma \ref{lemma:upper_bound_doubeling}]
 Define $e_{n,h} = \mathbf{1}\left(  \| \phi_{n,h} \|_{(\mV_{n,h}^\lambda)^{-1}}   \leq 2\| \phi_{n,h}\|_{V^\lambda)^{-1}_n} \right) $. We define $e_{n,h}^c =1-e_{n,h}$. 

\begin{equation*}
    \sum_{n=1}^N \sum_{h=1}^H \| \phi_{n,h}\|_{(\mV_n^\lambda)^{-1}} \leq \sum_{n=1}^N \sum_{h=1}^H 2\| \phi_{n,h}\|_{(\mV_{n,h}^\lambda)^{-1}} +   \sum_{n=1}^N \sum_{h=1}^H e^c_{n,h} \frac{C_\phi}{\lambda}.
\end{equation*}

If $e_{n,h}^c =1$ and as a consequence of Lemma~\ref{lemma::supporting_lin_alg_result},

\begin{equation*}
2\leq    \frac{\| \phi_{n,h} \|_{(\mV^\lambda)^{-1}_{n,h}}}{\| \phi_{n,h} \|_{(\mV^\lambda)^{-1}_{n}}} \leq \frac{\mathrm{det} (\mV_{n,h}) }{\mathrm{det}(\mV_n)}.
\end{equation*}

And therefore it must be that,
\begin{equation*}
     2^{\sum_{n=1}^{N}\sum_{h=1}^H e_{n,h}^c } \leq  \frac{\mathrm{det}(\mV_N ) }{\mathrm{det}(\lambda \mathbf{I})}
\end{equation*}

Furthermore, as a consequence of Lemma~\ref{app:lemma:eppliptical_potential_bandit},

\begin{equation*}
   \log\left(  \frac{\mathrm{det} (\mV_N ) }{\mathrm{det}(\lambda \mathbf{I})} \right) \leq  d \log\left( 1+\frac{  n H C_\phi^2 }{\lambda d}\right) . 
\end{equation*}

We therefore conclude that,
\begin{equation*}
    \sum_{n=1}^{N}\sum_{h=1}^H e_{n,h}^c \leq d \log\left( 1+\frac{  H n C_\phi^2 }{\lambda d}\right) . 
\end{equation*}
The result follows.
\end{proof}

Intuitively this means that the self normalised norm of the sum of feature matrices grows significantly slower than $H$. Despite using an inverse norm on the left hand side defined via the $V_n$ matrices and lacking all the inter-episode outer products, we can show the sum of these inverse norms can be upper bounded by the a constant multiple of the sum of the inverse norms that defined by the inter-episode $V_{n,h}$ matrices plus a term of the form $O(d\log(n))$.

\section{Analysing the regret of (biased) UC-MatrixRL}
\label{sec:app:analysis_regret_matrixRL}
The analysis of UC-MatrixRL works in two steps. First we show that the optimal transition core $\mM^*$ is with high probablility contained in an ellipsoid around the current estimate $\mM_n$. Subsequently we proof the regret for the case that $\mM^*$ is with high probability in the constructed confidence ellipsoid. 

We give the proof for the general biased case. The unbiased case follows immediately by choosing the zero matrix as bias, $W = 0$. 

\subsection{Constructing Confidence Sets}
\label{subsec:app:confidence_set_construction}

\ConfidenceSetBucMatrixRL*

\begin{proof}

The proof leverages the confidence ellipsoid from theorem 2 in ~\citep{AbbasiYadkori2011ImprovedAF}.

We start with the solution of biased UC-Matrix RL and write out the regression target to illustrate the impact of the noise:
\begin{align}
    \mM_n & = \left(\mV_n^{\lambda}\right)^{-1} \left[\sum_{n' < n, h \leq H} \phi_{n,h} \left(\psi_{n,h} K_{\psi}^{-1}- \phi_{n,h}^TW \right)\right] +\mW \\
        & = \left(\mV_n^{\lambda} \right)^{-1} \left[\sum_{n' < n, h \leq H} \phi_{n,h} \left( \phi_{n,h}\mM^* + \eta_{n,h}- \phi_{n,h}^TW \right)\right] +\mW \;.
\end{align}
We denote by $\eta_{n,h} = K_\psi^{-1}\psi_{n,h} - (\mM^*)^T\phi_{n,h}\in R^{d'}$ the noise vector. Using assumption \ref{assumption:regular_features}, 2., we have:
\begin{align}
    \bigg\Vert (\mM^*)^T\phi_{n,h} \bigg\Vert_2 = \bigg\Vert \E \left[K_\psi^{-1}\psi_{n,h} | \gF_{n,h} \right] \bigg\Vert_2 \leq \E \left[ \Vert K_\psi^{-1}\psi_{n,h}\Vert_2 | \gF_{n,h} \right]\leq C_{\psi}' \;.
\end{align}
As a result we have $\E \left[ \eta_{n,h}=0 \right]$ and $\Vert \eta_{n,h}\Vert_2^2 \leq 2C_\psi'$, thus our noise is $2C_\psi'$ subgaussian. 
Using the identity $\sum_{n' < n, h \leq H} \phi_{n,h} \phi_{n,h}^T = (\mV_n^{\lambda}) - \lambda \mI$ on the terms involving $\mM^*$ and $\mW$ we can write:
\begin{align}
    \mM_n - \mM^* & = (\mV_n^{\lambda})^{-1} \sum_{n' < n, h \leq H}\phi_{n,h}\eta_{n,h} + \lambda (\mV_n^{\lambda})^{-1}(\mW-\mM^*) 
    \label{eq:biased_difference_M_equation}
\end{align}

Using Cauchy-Schwarz we have for any $X \in R^{d,d'}$:
\begin{align}
    \langle X,\mM_n - \mM^* \rangle_F & =  
         \bigg\langle X, (\mV_n^{\lambda})^{-1} \sum_{n' < n, h \leq H}\phi_{n,h}\eta_{n,h}\rangle_F + \lambda \langle X,(\mV_n^{\lambda})^{-1} (\mW-\mM^*) \bigg\rangle_F  \\
    & =  \bigg\langle (\mV_n^{\lambda})^{-1/2}X, (\mV_n^{\lambda})^{-1/2} \sum_{n' < n, h \leq H}\phi_{n,h}\eta_{n,h}\rangle_F + \lambda \langle (\mV_n^{\lambda})^{-1/2}X,(\mV_n^{\lambda})^{-1/2} (\mW-\mM^*) \bigg\rangle_F  \\
        & \leq  \bigg\Vert (\mV_n^{\lambda})^{-1/2}X\bigg\Vert_F \left( \bigg\Vert  (\mV_n^{\lambda})^{-1/2}\sum_{n' < n, h \leq H}\phi_{n,h}\eta_{n,h}\bigg\Vert_F + \lambda \Vert (\mV_n^{\lambda})^{-1/2}(\mW-\mM^*) \Vert_F\right)\;.
\end{align}
Inserting the choice $X = (\mV_n^{\lambda}) (\mM_n - \mM^*)$, using the symmetry of $(\mV_n^{\lambda})$ to split it and observing $\Vert (\mV_n^{\lambda})^{-1/2}A \Vert_{F}^2 \leq 1/\lambda_{min}(\mV_n^{\lambda}) \Vert A \Vert_F^2 \leq 1/\lambda \Vert A \Vert_F^2$ yields:

\begin{align}
    \Vert (\mV_n^{\lambda})^{1/2} (\mM_n - \mM^*)\Vert_F^2 & \leq
          \Vert (\mV_n^{\lambda})^{1/2} (\mM_n - \mM^*)\Vert_F \left( \bigg\Vert  (\mV_n^{\lambda})^{-1/2}\sum_{n' < n, h \leq H}\phi_{n,h}\eta_{n,h}\bigg\Vert_F + \sqrt{\lambda} \Vert (\mW-\mM^*) \Vert_F\right)\;.
\end{align}
Division by $\Vert (\mV_n^{\lambda})^{1/2} (\mM_n - \mM^*)\Vert_F$ yields:

\begin{align}
    \Vert \mM_n - \mM^* \Vert_{F} & \leq
           \left( \bigg\Vert  (\mV_n^{\lambda})^{-1/2}\sum_{n' < n, h \leq H}\phi_{n,h}\eta_{n,h}\bigg\Vert_F + \sqrt{\lambda} \Vert (\mW-\mM^*) \Vert_F\right)\;.
\end{align}

We can write the Frobenius norm of a matrix as the root of the squared two norms of the columns of the matrix, thus $\Vert A \Vert_F = \sqrt{\sum_j} \|(A)_j\|_2^2$. Note that each $\phi_{n,h}(\eta_{n,h})_j$ term is a vector. 

\begin{align}
    \Vert \mM_n - \mM^* \Vert_{F} & \leq
           \left( \sqrt{\sum_{j=1}^{d'}\bigg\Vert  \sum_{n' < n, h \leq H}\phi_{n,h}(\eta_{n,h})_j\bigg\Vert_{(\mV_n^{\lambda})^{-1}}^2} + \sqrt{\lambda} \Vert (\mW-\mM^*) \Vert_F\right)\;.
\end{align}

We apply the  self-normalising bound for vector-values martingales from theorem 1 in ~\citep{AbbasiYadkori2011ImprovedAF}/ theorem 20.4 ~\citep{lattimore_szepesvari_2020} to the noise term and obtain:
\begin{align}
    \Vert \mM_n - \mM^* \Vert_{F} & \leq \left(  \sqrt{d'2(C_\psi')^2\log\frac{\det((\mV_n^{\lambda}))^{1/2}}{\delta \det(\mV_0)^{1/2}}} + \sqrt{\lambda} \Vert (\mW-\mM^*) \Vert_F\right) \;.
    \label{eq:frobenius_noise_ball_determinant}
\end{align}
Invoking the elliptical potential (lemma \ref{app:lemma:eppliptical_potential_bandit}, equation \ref{eq:our_elliptical_potential_lemma}) on this yields:
\begin{align}
    \Vert \mM_n - \mM^* \Vert_F & \leq   C_\psi' \sqrt{2d'\log\left(\frac{1}{\delta}\right) + d'd \log\left( 1 + \frac{nHC_\phi}{d\lambda}\right)} + \sqrt{\lambda} \Vert \mW-\mM^*) \Vert_F\;.
    \label{eq:frobenius_noise_ball_delta}
\end{align}

Here we used the assumption $\Vert\Phi(s,a) \Vert_2^2 \leq C_\phi$. Using the $dC_\phi$ constraint from \cite{yang2019reinforcement} would have given an extra in the nominator which would have cancelled with the denominator.

\end{proof}

\subsection{Doing RL using the estimated matrix ball}
\label{subsec:app:uc_matrixrl_regret}
We start by showing that the estimation error is along the directions of the exploration. 
\begin{lemma}[similar to lemma 5 in \cite{yang2019reinforcement}]
For any $M \in B_n$ we have 
\begin{align}
    \Vert \phi(s,a)^T(M-\mM_n)\Vert_1 \leq \beta_n\sqrt{ \phi(s,a)^T((\mV_n^{\lambda}))^{-1} \phi(s,a)}
\end{align}
\label{lemma:estimation_error_in_exploration_direction}
\end{lemma}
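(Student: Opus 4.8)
The plan is to reduce the claim to a single column-wise application of a $\mV_n^\lambda$-weighted Cauchy--Schwarz inequality, the matrix analogue of the standard ``the estimation error points along the explored directions'' bound from linear bandit analysis. Write $\phi=\phi(s,a)$ and note that $\phi^T(M-\mM_n)$ is a row vector in $\R^{d'}$ whose $j$-th entry is the inner product $\langle\phi,(M-\mM_n)_j\rangle$ of $\phi$ with the $j$-th column $(M-\mM_n)_j$. Hence
\begin{align*}
    \Vert\phi^T(M-\mM_n)\Vert_1 = \sum_{j=1}^{d'}\left|\langle\phi,(M-\mM_n)_j\rangle\right|,
\end{align*}
so it suffices to control each summand.

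The core step is to insert $\mI=(\mV_n^\lambda)^{-1/2}(\mV_n^\lambda)^{1/2}$ inside each inner product and apply Cauchy--Schwarz in the Euclidean inner product:
\begin{align*}
    \left|\langle\phi,(M-\mM_n)_j\rangle\right| = \left|\langle(\mV_n^\lambda)^{-1/2}\phi,\,(\mV_n^\lambda)^{1/2}(M-\mM_n)_j\rangle\right| \leq \Vert\phi\Vert_{(\mV_n^\lambda)^{-1}}\,\Vert(M-\mM_n)_j\Vert_{\mV_n^\lambda}.
\end{align*}
Summing over the $d'$ columns and recognising the $\mV_n^\lambda$-$1$ norm $\Vert A\Vert_{\mV_n^\lambda,1}=\sum_j\Vert A_j\Vert_{\mV_n^\lambda}$ from the appendix notation, this gives $\Vert\phi^T(M-\mM_n)\Vert_1\leq\Vert\phi\Vert_{(\mV_n^\lambda)^{-1}}\,\Vert M-\mM_n\Vert_{\mV_n^\lambda,1}$. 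I would then close by invoking the definition of the confidence ball $B_n$ centred at $\mM_n$ with radius $\beta_n$: every $M\in B_n$ satisfies $\Vert M-\mM_n\Vert_{\mV_n^\lambda,1}\leq\beta_n$, which yields the stated bound.

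The only point requiring care --- and the main, if minor, obstacle --- is matching norm conventions: Theorem~\ref{thm:frobenius_oracle_ellipsoid} phrases the confidence set through a Frobenius-type radius $\Vert(\mV_n^\lambda)^{1/2}(M-\mM_n)\Vert_F\leq\beta_n$, whereas the estimate above naturally uses the column-sum norm $\Vert\cdot\Vert_{\mV_n^\lambda,1}$. To bridge this I would either (i) keep the per-column noise quantities $\Vert\sum_{n',h}\phi_{n',h}(\eta_{n',h})_j\Vert_{(\mV_n^\lambda)^{-1}}$ separate throughout the confidence-set proof, so that $\beta_n$ bounds the $\mV_n^\lambda$-$1$ distance directly, or (ii) pass from $\Vert\cdot\Vert_{\mV_n^\lambda,1}$ to the Frobenius-type norm by Cauchy--Schwarz over the $d'$ columns and absorb the resulting $\sqrt{d'}$ into the constant inside $\beta_n$. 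Everything else is routine.
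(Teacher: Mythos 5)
Your proof is correct and is essentially the paper's own argument: insert $(\mV_n^{\lambda})^{-1/2}(\mV_n^{\lambda})^{1/2}$, apply column-wise Cauchy--Schwarz to obtain $\Vert\phi\Vert_{(\mV_n^{\lambda})^{-1}}\,\Vert M-\mM_n\Vert_{\mV_n^{\lambda},1}$, and bound the second factor by $\beta_n$ using membership in $B_n$. Your closing observation about the norm mismatch is well taken --- the paper's proof silently asserts $\Vert M-\mM_n\Vert_{\mV_n^{\lambda},1}\leq\beta_n$ even though Theorem~\ref{thm:frobenius_oracle_ellipsoid} states the radius for a Frobenius-type quantity, so either of your two fixes (carrying per-column confidence bounds, or paying a $\sqrt{d'}$ factor via Cauchy--Schwarz over the columns) supplies a step the paper itself glosses over.
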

\begin{proof}
\begin{align}
    \Vert \phi(s,a)^T (M - \mM_n)\Vert_1 
         & = \Vert \phi(s,a)^T (\mV_n^{\lambda})^{-1/2}(\mV_n^{\lambda})^{1/2}(M - \mM_n)\Vert_1\\
    & \leq \Vert\phi(s,a)^T (\mV_n^{\lambda})^{-1/2} \Vert_2 \Vert (\mV_n^{\lambda})^{1/2}(M - \mM_n)\Vert_{2,1} \\
        & = \Vert\phi(s,a)^T (\mV_n^{\lambda})^{-1/2} \Vert_2 \Vert (M - \mM_n)\Vert_{(\mV_n^{\lambda}),1}\\
    & \leq \beta_n \Vert\phi(s,a)^T (\mV_n^{\lambda})^{-1/2} \Vert_2
\end{align}
\end{proof}

While \cite{AbbasiYadkori2011ImprovedAF} directly decompose the regret, in case of MDPs with linear transition modules  we show first that the value iteration error per step is not too large which allows us to subsequently bound the regret. For notational purposes it wil be convinient to define: \begin{align}
    w_{n,h} = \sqrt{\phi(s_{n,h}, a_{n,h})^T((\mV_n^{\lambda}))^{-1}\phi(s_{n,h}, a_{n,h})} = \sqrt{\phi_{n,h}^T((\mV_n^{\lambda}))^{-1}\phi_{n,h}}   \;.
\end{align}

\begin{lemma}
[Like Lemma 6 in \cite{yang2019reinforcement}]

\begin{align}
    Q_{n,h}(s_{n,h}, a_{n,h}) - \left[r(s_{n,h}, a_{n,h}) + P(\cdot \vert s_{n,h}, a_{n,h}) V_{n, h+1} \right] \leq 2 C_\psi H \beta_n w_{n,h}
\end{align}
\end{lemma}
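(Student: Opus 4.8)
The plan is to unfold the optimistic value iteration and show that the gap between the optimistic $Q$-estimate and the one-step Bellman backup under the true dynamics is controlled by the error in estimating $P(\cdot\mid s,a)$ applied to the value function $V_{n,h+1}$. First I would recall that in episode $n$ the optimistic $Q$-function is defined by $Q_{n,h}(s,a) = r(s,a) + \max_{\mM \in B_n}\phi(s,a)^T\mM\Psi^T V_{n,h+1}$, where $B_n = C_n(\delta)$ is the Frobenius ball of radius $\beta_n$ around $\hat{\mM}_n$ which, by Theorem~\ref{thm:frobenius_oracle_ellipsoid}, contains the true core $\mM^*$ with high probability. On that event the maximizer $\tilde{\mM}$ over $B_n$ gives a value at least as large as the one obtained by plugging in $\mM^*$, so
\begin{align*}
Q_{n,h}(s_{n,h},a_{n,h}) - \big[r(s_{n,h},a_{n,h}) + P(\cdot\mid s_{n,h},a_{n,h})V_{n,h+1}\big] = \phi_{n,h}^T(\tilde{\mM}-\mM^*)\Psi^T V_{n,h+1}.
\end{align*}

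Next I would bound this inner product. Write it as $\langle \phi_{n,h}^T(\tilde{\mM}-\mM^*),\, \Psi^T V_{n,h+1}\rangle$ and apply Cauchy--Schwarz (Hölder with the $1$/$\infty$ pairing) to split it into $\Vert \phi_{n,h}^T(\tilde{\mM}-\mM^*)\Vert_1 \cdot \Vert \Psi^T V_{n,h+1}\Vert_\infty$. For the second factor I would use feature regularity assumption~\ref{assumption:regular_features}, item 3, namely $\Vert \Psi^T v\Vert_2 \leq C_\psi\Vert v\Vert_\infty$ (and $\ell_\infty \le \ell_2$), together with the fact that $V_{n,h+1}$ takes values in $[0,H]$ since it is a sum of at most $H$ rewards in $[0,1]$; this yields $\Vert\Psi^T V_{n,h+1}\Vert_\infty \le C_\psi H$. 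For the first factor I would invoke Lemma~\ref{lemma:estimation_error_in_exploration_direction}, which gives $\Vert \phi_{n,h}^T(\tilde{\mM}-\mM_n)\Vert_1 \le \beta_n\, w_{n,h}$ for any $\tilde{\mM}\in B_n$; since $\mM^*\in B_n$ as well, a triangle inequality over the two applications of the lemma (to $\tilde{\mM}$ and to $\mM^*$) bounds $\Vert\phi_{n,h}^T(\tilde{\mM}-\mM^*)\Vert_1$ by $2\beta_n w_{n,h}$. Combining the two factors gives the claimed bound $2C_\psi H\beta_n w_{n,h}$.

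The main obstacle, and the place to be careful, is handling the fact that $V_{n,h+1}$ itself is a function of $\Psi$ and the estimated core, so one must make sure the value function being fed into the backup is the same on both sides of the inequality (the optimistic $V_{n,h+1}$, not the true optimal one) — i.e. the statement is a per-step consistency bound with a common $V_{n,h+1}$, not a comparison to $V^*$. Once this is fixed the identity above is exact and only the norm bounds remain. A secondary subtlety is the direction of the inequality: we only get the one-sided bound (optimistic value iteration overestimates) on the high-probability event $\mM^*\in B_n$, which is exactly why the definition of $Q_{n,h}$ uses a maximum over the ball; I would state the lemma as holding on that event, consistently with Theorem~\ref{thm:frobenius_oracle_ellipsoid}. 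The rest is routine and reuses only results already established above.
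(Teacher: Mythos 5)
Your proposal is correct and follows essentially the same route as the paper, which simply defers to Lemma 6 of \citet{yang2019reinforcement} combined with the updated bound of Lemma~\ref{lemma:estimation_error_in_exploration_direction}: on the event $\mM^*\in B_n$ the gap equals $\phi_{n,h}^T(\tilde{\mM}-\mM^*)\Psi^T V_{n,h+1}$, the first factor is bounded by $2\beta_n w_{n,h}$ via a triangle inequality through $\hat{\mM}_n$, and the second by $C_\psi H$ using item 3 of Assumption~\ref{assumption:regular_features} and $\Vert V_{n,h+1}\Vert_\infty\le H$. The only point to state cleanly is that $V_{n,h+1}$ is the \emph{optimistic} value function (bounded by $H$ via the standard truncation of the optimistic backup), not literally a sum of realized rewards, but this does not affect the argument.
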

\begin{proof}
Similar to \cite{yang2019reinforcement} and using the updated bound in lemma \ref{lemma:estimation_error_in_exploration_direction}
\end{proof}

Using this we refine lemma 7 of \cite{yang2019reinforcement} as follows: 
\begin{lemma}[lemma 7 in \cite{yang2019reinforcement}] 
Assumptions hold, $1 \leq \beta_1 \leq \dots \leq \beta_N$, then we can bound the regret at terminal time $T = NH$:
\begin{align}
    Regret(T) \leq 2 C_\psi H\beta_n \E \left[ \sum_{n=1}^N\sum_{h=1}^H
\sqrt{\min(1,w_{n,h}^2)} \right] + \sum_{n=1}^N H\sP\left[E_n = 0 \right]
\end{align}
\label{lemma:biased_ucmatrixrl_regret_bound}
\end{lemma}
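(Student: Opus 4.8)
The plan is to run the standard optimism-based regret decomposition on top of the confidence ellipsoid of Theorem~\ref{thm:frobenius_oracle_ellipsoid}. For each episode $n$ let $E_n = \{\mM^* \in B_n\}$ be the good event; choosing $\beta_n$ as the radius $\beta_n^\m0(\delta)$ of that theorem makes $E_n$ hold for every $n$ with high probability. Writing the regret episodically, $Regret(T) = \sum_{n=1}^N\big(V^*(s_0) - \sum_{h=1}^H r(s_{n,h},a_{n,h})\big)$, I would split each summand according to whether $E_n$ occurs. On $E_n^c$ the episodic contribution is at most $H$ (rewards lie in $[0,1]$, so $V^*(s_0)\le H$ and the realised return is nonnegative), which accounts for the $\sum_{n=1}^N H\,\sP[E_n = 0]$ term.

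On $E_n$ I would invoke optimism. Since $P(\tilde s\mid s,a) = \phi(s,a)^\top\mM^*\psi(\tilde s)$ and $\mM^*\in B_n$, the true core $\mM^*$ is one of the competitors in the $\max_{\mM\in B_n}$ defining $Q_{n,h}$; a backward induction on $h$, using monotonicity of the optimistic Bellman backup in its value argument, then gives $V_{n,h}(s)\ge V^*_h(s)$ for all $s$, hence $V_{n,1}(s_0)\ge V^*(s_0)$. Thus the episodic regret on $E_n$ is at most $V_{n,1}(s_{n,1}) - \sum_{h=1}^H r(s_{n,h},a_{n,h})$. Because $a_{n,h}$ is greedy we have $V_{n,h}(s_{n,h}) = Q_{n,h}(s_{n,h},a_{n,h})$, and since $V_{n,H+1}\equiv 0$ this telescopes to $\sum_{h=1}^H\big(Q_{n,h}(s_{n,h},a_{n,h}) - r(s_{n,h},a_{n,h}) - V_{n,h+1}(s_{(n,h)+1})\big)$.

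Adding and subtracting $P(\cdot\mid s_{n,h},a_{n,h})V_{n,h+1}$ inside each summand splits it into the one-step value-iteration error $Q_{n,h}(s_{n,h},a_{n,h}) - r(s_{n,h},a_{n,h}) - P(\cdot\mid s_{n,h},a_{n,h})V_{n,h+1}$, which the preceding lemma bounds by $2C_\psi H\beta_n w_{n,h}$, and the term $P(\cdot\mid s_{n,h},a_{n,h})V_{n,h+1} - V_{n,h+1}(s_{(n,h)+1})$, which is a martingale difference with respect to the natural filtration and hence averages to zero. Combining the first piece with the trivial bound $H$ (both $Q_{n,h}$ and $V_{n,h+1}$ take values in $[0,H]$) and using $\beta_n\ge 1$ gives a per-step bound $2C_\psi H\beta_n\min(1,w_{n,h}) = 2C_\psi H\beta_n\sqrt{\min(1,w_{n,h}^2)}$ — the clipping at $1$ being exactly what lets the elliptical potential lemma be applied downstream. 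Summing over $h$ and $n$, pulling the (nondecreasing) radius out of the sum, and taking expectations so the martingale terms drop, yields the claimed inequality.

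The step I expect to be the main obstacle is the optimism argument: one must verify carefully that the confidence set $B_n$ is a \emph{Frobenius} ball that genuinely contains $\mM^*$ on $E_n$ (this is where Theorem~\ref{thm:frobenius_oracle_ellipsoid} and Assumption~\ref{assumption:regular_features} enter), and then run the backward induction $V_{n,h}\ge V^*_h$ using that the optimistic backup dominates the true Bellman backup pointwise. All other steps — the telescoping identity, identifying the martingale difference, and the clipping — are routine adaptations of Lemma~7 of~\citet{yang2019reinforcement} to our sharpened one-step error bound.
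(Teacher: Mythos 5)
Your proposal is correct and follows the standard optimism–telescoping–martingale decomposition; the paper gives no proof of this lemma at all, importing it directly as Lemma 7 of \citet{yang2019reinforcement}, and your argument is precisely the one used there with the sharpened one-step error bound from the preceding lemma substituted in. The only cosmetic caveats are that the trivial per-step bound of $H$ relies on the optimistic $Q_{n,h}$ being truncated to $[0,H]$ (standard in the algorithm but not stated explicitly here), and that the $\beta_n$ appearing outside the sum in the displayed bound should be read as $\beta_N$, which your monotonicity step handles.
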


The only term that remains to be bounded is:
\begin{align}
    \sum_{n=1}^N\sum_{h=1}^H
\sqrt{\min(1,w_{n,h}^2) } \leq \sqrt{HN \sum_{n=1}^N\sum_{h=1}^H
\min(1,w_{n,h}^2)}
\end{align}
Looking at the structure of $\sum_{n=1}^N\sum_{h=1}^H
\min(1,w_{n,h}^2) $ it is tempting to reapply the classic elliptical potential lemma \ref{eq:elliptical_potential_lemma_eq}. However, this uses at step $n,h$ the $V_{n,h}$ induced Malahonobis norm. We shall however have a fixed Malahonobis norm $\Vert. \Vert_{V_{n}}$ throughout episode $n$. We derive a related log determinant lemma, which is quite similar to lemma \ref{app:lemma:eppliptical_potential_bandit}, however we need to pay a factor $H$.

\begin{lemma}[Analogous to lemma 8 in  ~\citep{yang2019reinforcement}]
\begin{align}
    \sum_{n=1}^N\sum_{h=1}^H
\min(1,w_{n,h}^2) 
\leq 2H \ln \frac{\det(V_{N+1})}{\det(V_0)}
\leq 2Hd\ln \frac{NHC_\phi + \lambda}{\lambda }
\label{eq:elliptical_potential_lemma_eq}
\end{align}
\label{lemma:log_det_stale_features}
\end{lemma}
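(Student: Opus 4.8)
The plan is to mimic the proof of the classical elliptical potential lemma (Lemma~\ref{app:lemma:eppliptical_potential_bandit}), paying careful attention to the fact that within the $n$-th episode the weight $w_{n,h}$ is computed with the \emph{stale} matrix $\mV_n^\lambda$, which is frozen for the whole episode and does not contain the intra-episode rank-one updates $\phi_{n,1}\phi_{n,1}^\top,\dots,\phi_{n,h-1}\phi_{n,h-1}^\top$.

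First I would invoke the rank-one determinant identity $\det(\mW+\phi\phi^\top)=\det(\mW)(1+\phi^\top\mW^{-1}\phi)$ for positive definite $\mW$, so that $\|\phi\|_{\mW^{-1}}^2=\det(\mW+\phi\phi^\top)/\det(\mW)-1$, and combine it with the elementary bound $\min(1,x)\le 2\ln(1+x)$, valid for all $x\ge 0$. Applied with $\mW=\mV_n^\lambda$ and $\phi=\phi_{n,h}$ this gives $\min(1,w_{n,h}^2)\le 2\ln\big(\det(\mV_n^\lambda+\phi_{n,h}\phi_{n,h}^\top)/\det(\mV_n^\lambda)\big)$ for every pair $(n,h)$.

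The key step is to control the stale matrix. Since $\mV_{n+1}^\lambda=\mV_n^\lambda+\sum_{h'=1}^H\phi_{n,h'}\phi_{n,h'}^\top\succeq\mV_n^\lambda+\phi_{n,h}\phi_{n,h}^\top$ for each fixed $h\in[H]$ (simply dropping the remaining nonnegative rank-one terms), and since $\mathbf{A}\succeq\mathbf{B}\succ\mathbf{0}$ implies $\det\mathbf{A}\ge\det\mathbf{B}$ (the argument used in Lemma~\ref{lemma::supporting_lin_alg_result}), we get $\min(1,w_{n,h}^2)\le 2\ln\big(\det\mV_{n+1}^\lambda/\det\mV_n^\lambda\big)$ for every $h$. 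Summing over $h=1,\dots,H$, whose right-hand side does not depend on $h$, produces exactly the advertised factor $H$: $\sum_{h=1}^H\min(1,w_{n,h}^2)\le 2H\ln\big(\det\mV_{n+1}^\lambda/\det\mV_n^\lambda\big)$. Telescoping over $n$ then yields $\sum_{n=1}^N\sum_{h=1}^H\min(1,w_{n,h}^2)\le 2H\ln\big(\det\mV_{N+1}^\lambda/\det\mV_1^\lambda\big)\le 2H\ln\big(\det\mV_{N+1}^\lambda/\det(\lambda\mI)\big)$, using $\mV_1^\lambda\succeq\lambda\mI=\mV_0^\lambda$; this is the first inequality of the lemma. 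The second inequality follows from the AM--GM/trace bound exactly as in~\eqref{eq:our_elliptical_potential_lemma}: $\det\mV_{N+1}^\lambda\le(\tr\mV_{N+1}^\lambda/d)^d\le((\lambda d+NHC_\phi)/d)^d$ by Assumption~\ref{assumption:regular_features}, item 1, so that $\ln\big(\det\mV_{N+1}^\lambda/\det(\lambda\mI)\big)\le d\ln(1+NHC_\phi/(\lambda d))\le d\ln((NHC_\phi+\lambda)/\lambda)$, and multiplying by $2H$ closes the proof.

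The only genuine obstacle is the stale-matrix step just described: one cannot apply Lemma~\ref{app:lemma:eppliptical_potential_bandit} directly because all $w_{n,h}$ inside episode $n$ share the matrix $\mV_n^\lambda$, so the usual telescoping of log-determinant increments does not close. Lower-bounding $\mV_{n+1}^\lambda$ by $\mV_n^\lambda+\phi_{n,h}\phi_{n,h}^\top$ separately for each $h$ and then summing $H$ identical copies of $\ln(\det\mV_{n+1}^\lambda/\det\mV_n^\lambda)$ is precisely what costs the extra factor of $H$ relative to the sharper doubling bound of Lemma~\ref{lemma:upper_bound_doubeling}; everything else is bookkeeping.
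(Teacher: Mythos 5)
Your proof is correct and takes essentially the same route as the paper: both arguments charge each of the $H$ intra-episode terms against the single per-episode log-determinant increment $\ln\bigl(\det \mV_{n+1}/\det \mV_n\bigr)$, which is exactly where the factor $H$ comes from, and then telescope and apply the trace/AM--GM bound. The only cosmetic difference is that you bound each $1+w_{n,h}^2$ individually via the matrix determinant lemma and Loewner monotonicity, whereas the paper bounds the geometric mean $\prod_{h}(1+w_{n,h}^2)^{1/H}$ through the multiplicativity of the determinant; the two steps are interchangeable.
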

\begin{proof}
Note that for any $u \geq 0$ it holds $\min(1,u) \leq 2\ln(1+u)$. Thus we get for the l.h.s. of \ref{eq:elliptical_potential_lemma_eq}:
\begin{align}
    \sum_{n=1}^N\sum_{h=1}^H \min(1,w_{n,h}^2)
        \leq 2 \sum_{n=1}^N\sum_{h=1}^H \ln(1 + w_{n,h}^2) \;.
\end{align}
We are now going to bound the r.h.s. by a log determinant ratio. Recalling the structure of $(\mV_n^{\lambda})$ we have:
\begin{align}
    V_{n+1} = (\mV_n^{\lambda}) + \sum_{h=1}^H\phi_{n,h}\phi_{n,h}^T \;.
\end{align}
The determinant is a multiplicative map, thus we can decompose $\det(V_{n+1})$ as:
\begin{align}
    \det(V_{n+1}) = \det(V_{n}) \times \det(I + (\mV_n^{\lambda})^{-1/2}\sum_{h=1}^H\phi_{n,h}\phi_{n,h}^T(\mV_n^{\lambda})^{-1/2}) \;.
\end{align}
Thus we get:
\begin{align}
    \det(V_{n+1}) & \geq \det((\mV_n^{\lambda}))\prod_{h=1}^H
(1 + w_{n,h}^2)^{1/H}  \geq \det(V_{n-1}) \dots \\
& \geq \det(V_0) \prod_{n=1}^N \prod_{h=1}^H
(1 + w_{n,h}^2)^{1/H}\;.
\end{align} 

Putting things together and recalling eq. \ref{eq:our_elliptical_potential_lemma} we get:
\begin{align}
    \sum_{n=1}^N\sum_{h=1}^H \ln(1 + w_{n,h}^2) 
    \leq 2H \ln \frac{\det(V_{N+1})}{\det(V_0)}  \leq 2Hd\ln \left(1 + \frac{NHC_\phi}{\lambda d} \right) \;.
\end{align}
\end{proof}

It remains to combine this with the confidence ellipsoid we constructed in theorem \ref{thm:frobenius_oracle_ellipsoid}. We begin by restating the regret bound. 

\regretBucMatrixRL*

\begin{proof}
By theorem \ref{thm:frobenius_oracle_ellipsoid} we choose:
\begin{align}
    \beta_n = C_\psi' \sqrt{2d'\log\left(\frac{1}{\delta}\right) + d'd \log\left(D\right)} + \sqrt{\lambda} \Vert W-\mM^* \Vert_F\;.
\end{align}

Then the bad event, thus the optimal laying being outside the confidence ellipsoid, occurs only with small probability:
\begin{align}
    \sP\left[\forall n \leq N | E_n = 1 \right] \geq 1 - \delta
\end{align}
Thus we get by lemma \ref{lemma:biased_ucmatrixrl_regret_bound} when using lemma \ref{lemma:log_det_stale_features} :

\begin{align}
    Regret(T, \gM) &\leq 2 C_\psi H \beta_N \sqrt{2T Hd \ln \left(1 + \frac{NHC_\phi}{\lambda d} \right)}
\end{align}

Using lemma \ref{lemma:upper_bound_doubeling} instead of lemma \ref{lemma:log_det_stale_features}  yields:
\begin{align}
    Regret(T, \gM) &\leq 2 C_\psi H \beta_N \sqrt{C_{\phi, \lambda}T d \ln \left(1 + \frac{NHC_\phi}{\lambda d} \right)}
\end{align}
In the remainder of the paper we use the regret bound obtained by using lemma \ref{lemma:upper_bound_doubeling}. 
\end{proof}

\subsection{Meta Transfer Regret}
\label{subsec:app:meta_transfer_regret}

\MetaRegretBucMatrixRL*

\begin{proof}
The first inequality is a result of applying theorem \ref{thm:regret_biased_matrix_rl_self_normalising_frobenius} with task distribution $\gT$ and recalling the definition of transfer regret in equation \ref{eq:transfer_regret}. Jensen's inequality yields the second inequality. 
\end{proof}
\section{Proof of the transfer regret for the low bias estimator}
\label{sec:app:low_bias_transfer_regret_proof}

This analysis is inspired form the analysis of the low bias case in ~\citep{pmlr-v119-cella20a}. 
We start by stating the following vectorial version of Bennett's inequality which we shall need to bound the estimation error.

\begin{lemma}[Vectorial Version of Bennett's inequality; lemma 2 ~\citep{smale2007learning}/lemma 3 ~\citep{pmlr-v119-cella20a}]
Let $\vm_1, \dots, \vm_G$ be $G$ independent random vectors in $\R^{d}$ drawn from a joint distribution $\gT$ with mean $\Bar{n}$ and variance $\sigma_m$. Assume a bounded 2-norm at all stages: $\Vert \vm_g \Vert_2 \leq C_M \; \forall g \in \left[G \right]$. For any $\delta \in (0,1)$ it holds with confidence $1-\delta$:
\begin{align}
     \frac{1}{G} \sum_{g=1}^G \Vert\vm_g - \Bar{\vm}\Vert_2 \leq \frac{2 \log(2/\delta)C_M}{G} + \sqrt{\frac{2\log(2/\delta)\sigma_M}{G}}
\end{align}

\label{eq:vectorial_version_bennetts_inequlaity}
\end{lemma}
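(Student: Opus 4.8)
The plan is to deduce this from a dimension-free, Hilbert-space-valued Bernstein inequality, following the template of Smale--Zhou. Write $\bar{\vm}$ for the true mean of $\gT$ (so the left-hand side is really the deviation $\|\tfrac1G\sum_g\vm_g-\bar{\vm}\|_2$ of the empirical mean from the true mean) and set $\zeta_g=\vm_g-\bar{\vm}$. Then $\E\zeta_g=0$, the almost-sure bound $\|\zeta_g\|_2\le 2C_M$ holds since $\|\bar{\vm}\|_2=\|\E\vm_g\|_2\le\E\|\vm_g\|_2\le C_M$, and $\E\|\zeta_g\|_2^2=\sigma_M$ by definition of the variance. By independence $\E\big\|\tfrac1G\sum_{g=1}^G\zeta_g\big\|_2^2=\sigma_M/G$, which already fixes the scale of the fluctuation; the point is to promote this second-moment estimate to an exponential tail bound whose variance term is exactly $\sigma_M/G$.

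First I would prove the Bernstein-type moment bound
\begin{align*}
  \E\Big\|\sum_{g=1}^G\zeta_g\Big\|_2^{\ell}\;\le\;\tfrac{\ell!}{2}\,(G\sigma_M)\,(2C_M)^{\ell-2},\qquad \ell=2,3,\dots,
\end{align*}
by induction on $\ell$ (the single-summand case is immediate from $\E\|\zeta_g\|_2^{\ell}\le(2C_M)^{\ell-2}\E\|\zeta_g\|_2^2$). In the inductive step one peels the last summand $\zeta_G$ off $S_G=\sum_{g\le G}\zeta_g$: expanding $\|S_G\|_2^{\ell}=\big(\|S_{G-1}\|_2^2+2\langle S_{G-1},\zeta_G\rangle+\|\zeta_G\|_2^2\big)^{\ell/2}$ and taking the conditional expectation over $\zeta_G$ (mean zero, norm $\le 2C_M$, second moment $\sigma_M$) leaves a combination of moments of $\|S_{G-1}\|_2$ of order $\le\ell$, to which the induction hypothesis applies. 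This peeling step is where the Hilbert (inner-product) structure is essential, and I expect it to be the main obstacle; it is the Pinelis/Yurinskii argument specialised to the present setting.

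Given the moment bounds the rest is routine. They imply the Bernstein tail inequality $\Pr\big[\|S_G\|_2\ge t\big]\le 2\exp\!\big(-\tfrac{t^2}{2(G\sigma_M+2C_M t/3)}\big)$ in the standard way, e.g.\ by controlling $\E\exp(s\|S_G\|_2)$ for $0\le s<1/(2C_M)$ and optimising over $s$, or by optimising $\Pr[\|S_G\|_2\ge t]\le\E\|S_G\|_2^{\ell}/t^{\ell}$ over $\ell$. Setting the right-hand side equal to $\delta$, solving the resulting quadratic for $t$, dividing by $G$, and splitting the two contributions with $\sqrt{a+b}\le\sqrt a+\sqrt b$ gives $\big\|\tfrac1G\sum_g\zeta_g\big\|_2\le\tfrac{2C_M\log(2/\delta)}{G}+\sqrt{\tfrac{2\sigma_M\log(2/\delta)}{G}}$ with probability at least $1-\delta$, which is the stated bound (the factor $2$ in front of $C_M$ being absorbed into the constant as usual). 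A softer alternative worth recording: $f(\vm_1,\dots,\vm_G)=\|\tfrac1G\sum_g\zeta_g\|_2$ changes by at most $2C_M/G$ when one argument is altered, so McDiarmid's inequality around $\E f\le\sqrt{\sigma_M/G}$ gives a bound of the same flavour but with a $C_M/\sqrt G$ term in place of the sharper $C_M/G$ term, so it does not quite recover the lemma.
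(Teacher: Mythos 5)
The paper does not actually prove this lemma --- it is imported verbatim by citation from \citet{smale2007learning} (Lemma 2) via \citet{pmlr-v119-cella20a} (Lemma 3) --- so there is no in-paper argument to compare against; your proposal is a reconstruction of the cited source's proof, and it is the right one. Two points are worth recording. First, your reinterpretation of the left-hand side is not optional but necessary: as literally written, $\frac{1}{G}\sum_{g=1}^G \Vert \vm_g - \Bar{\vm}\Vert_2$ is an average of norms, which converges to $\E\Vert \vm - \Bar{\vm}\Vert_2 > 0$ and cannot be bounded by a quantity that vanishes as $G \to \infty$; the statement only holds for $\bigl\Vert \frac{1}{G}\sum_{g=1}^G \vm_g - \Bar{\vm}\bigr\Vert_2$, which is also the form used downstream in the paper (to argue $H_{\RM}(G+1,\Bar{\mM}) \to 0$) and the form appearing in Smale--Zhou. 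Second, your route --- centering, the a.s.\ bound $\Vert\zeta_g\Vert_2 \le 2C_M$, a Pinelis/Yurinskii-type moment bound $\E\Vert\sum_g \zeta_g\Vert_2^{\ell} \le \frac{\ell!}{2}(G\sigma_M)(2C_M)^{\ell-2}$, the resulting Bernstein tail $2\exp\bigl(-t^2/(2(G\sigma_M + 2C_Mt/3))\bigr)$, and inversion via $\sqrt{a+b}\le\sqrt a + \sqrt b$ --- is exactly the standard Hilbert-space Bennett/Bernstein argument underlying the cited lemma; the inversion in fact yields the slightly sharper constant $\frac{4}{3}C_M\log(2/\delta)/G$ in the first term, which is absorbed by the stated $2C_M\log(2/\delta)/G$. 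You correctly flag the inductive peeling step as the only genuinely technical point; it is standard but does require the inner-product structure, as you note, and your McDiarmid alternative is rightly dismissed as too weak to recover the $C_M/G$ rate.
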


Using the interpretation of the Frobenius norm of a matrix as the 2-norm of a flattened version of the matrix we obtain the following corollary:

\begin{corollary}[Frobenius Version of Bennett's inequality]
Let $\mM_1, \dots, \mM_G$ be $G$ independent random matrices in $\R^{d \times d'}$ sampling from a joint distribution $\gT$ with mean $\Bar{\mM}$ and variance $\sigma_M$. Assume a bounded Frobenius norm at all stages: $\Vert \mM_g \Vert_F \leq C_M \; \forall g \in \left[G \right]$. For any $\delta \in (0,1)$ it holds with confidence $1-\delta$:
\begin{align}
    \frac{1}{G} \sum_{g=1}^G| \mM_g - \Bar{\mM}|_F 
        \leq \frac{2 \log(2/\delta)C_M}{G} + \sqrt{\frac{2\log(2/\delta)\sigma_M}{G}}
\end{align}
\label{corolloary:frobenius_version_bennetts_inequlaity}
\end{corollary}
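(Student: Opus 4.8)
The plan is to reduce this matrix statement to the already-established vectorial Bennett inequality (Lemma~\ref{eq:vectorial_version_bennetts_inequlaity}) by flattening each matrix into a vector. Let $\mathrm{vec}\colon \R^{d\times d'}\to\R^{dd'}$ be the map that stacks the columns of a matrix into a single vector (any fixed linear bijection between the two spaces will do). The only fact I need about it is that it is an isometry from the Frobenius norm to the Euclidean norm: $\Vert\mM\Vert_F=\Vert\mathrm{vec}(\mM)\Vert_2$, since both sides equal $\sqrt{\sum_{i,j}\mM_{i,j}^2}$.

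First I would set $\vm_g:=\mathrm{vec}(\mM_g)$ for $g\in[G]$. Because $\mathrm{vec}$ is a deterministic linear bijection and the $\mM_g$ are independent draws from $\gT$, the $\vm_g$ are independent random vectors in $\R^{dd'}$; by linearity of expectation their common mean is $\Bar{\vm}=\mathrm{vec}(\Bar{\mM})$, and their variance equals $\sigma_M$ since $\E\Vert\vm_g-\Bar{\vm}\Vert_2^2=\E\Vert\mM_g-\Bar{\mM}\Vert_F^2$. The boundedness hypothesis also transfers directly: $\Vert\vm_g\Vert_2=\Vert\mM_g\Vert_F\le C_M$ for all $g$.

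Next I would apply Lemma~\ref{eq:vectorial_version_bennetts_inequlaity} to $\vm_1,\dots,\vm_G$ with ambient dimension $dd'$ in place of $d$; note the dimension never appears in the bound, so this substitution is purely cosmetic. This yields, with confidence $1-\delta$, that $\tfrac1G\sum_{g=1}^G\Vert\vm_g-\Bar{\vm}\Vert_2\le \tfrac{2\log(2/\delta)C_M}{G}+\sqrt{\tfrac{2\log(2/\delta)\sigma_M}{G}}$. Finally, undoing the vectorization via the isometry, $\Vert\vm_g-\Bar{\vm}\Vert_2=\Vert\mathrm{vec}(\mM_g-\Bar{\mM})\Vert_2=\Vert\mM_g-\Bar{\mM}\Vert_F$, so the left-hand side is exactly $\tfrac1G\sum_{g=1}^G\Vert\mM_g-\Bar{\mM}\Vert_F$ and the corollary follows.

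There is no genuine obstacle here: the proof is essentially a one-line observation, and no concentration argument has to be redone. The only points worth stating explicitly are (i) that ``variance'' $\sigma_M$ denotes the same scalar quantity (expected squared deviation in norm) in both the lemma and the corollary, which the isometry manifestly preserves, and (ii) that vectorization being \emph{linear} is what lets independence and the mean pass through unchanged.
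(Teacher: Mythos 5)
Your proof is correct and is exactly the paper's argument: the corollary is obtained by flattening each matrix and invoking the vectorial Bennett inequality via the isometry $\Vert\mM\Vert_F=\Vert\mathrm{vec}(\mM)\Vert_2$, which is precisely the one-line reduction the paper states. Your write-up just makes the transfer of independence, mean, variance, and boundedness through the vectorization explicit.
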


Recalling the definition of the empirical mean task $\Bar{\mW}_{G,t} = \frac{1}{NT + t} \left(\sum_{g=1}^G T M_g + tM_{G+1} \right)$ and the estimation error relative to the true mean transition core $\Bar{\mM}$ as $H_{\RM}(G+1,\Bar{\mM}) = \Vert \Bar{\mM} - \Bar{\mW}_{G,t} \Vert_F$ we get from corollary \ref{corolloary:frobenius_version_bennetts_inequlaity}
\begin{align}
    \lim_{G \rightarrow \infty}H_{\RM}(G+1,\Bar{\mM}) = 0\;.
\end{align}
We deduce that the observation error is dominated by the variance term $\max_{g \in \left[G \right]} \frac{\beta_{g,T}(1/{NH})}{\lambda_{min}^{1/2} (V^\lambda_{g,T})}$. 
From standard linear regression results \cite{lai1982least} we know that $\lambda_{min}(V_{g,T}) \geq \log T$. By construction of the $V_j^\lambda$ we have as such $\lambda_{min}(V_{g,T}^\lambda) \geq \lambda + \lambda_{min}(V_{g,T})  \geq \lambda + \log T$. We see that the smaller $T$ the larger the impact of the size of $\lambda$. Recall further our regularisation strength schedule: $\lambda = \frac{1}{TVar_{\hat{W}}}$. We see that if the variance of our estimator is large, the resulting $\lambda$ is small. This leads to smaller $\lambda_{min}(V_{g,T})$, which yields to a higher variance, thus we are left with a circle of potentially self reinforcing variance. This is particularly problematic for task distributions $\RM$ of high variance.

\LowBiasMetaRegretBucMatrixRL*
\begin{proof}
 Recall the upper bound on the meta transfer regret from theorem: \ref{thm:transfer_regret_frobenius_oracle}:
 \begin{align}
      MTR_T(\gT) \leq C C_\psi H C_\psi'd\sqrt{d'TC_{\phi, \lambda}\log\left( TD\right)\ln \left(D \right)} + CC_\psi H \sqrt{\textup{Var}_W\lambda T C_{\phi, \lambda}d \ln \left(D \right)}\;. 
 \end{align}
Inserting $\lambda = \frac{1}{TVar_{\mW}}$ and focussing for the ease of presentation on the dominant term we get:
\begin{align}
    MTR_T(\gT) & \leq C C_\psi HdC_\psi' \sqrt{C_{\phi, \lambda}d'T \log\left( T + \frac{T^3C_\phi Var_{\mW}
    }{d}\right)  }
\end{align}
 
Recall the choice of bias matrix estimator from equation \ref{eq:bias_estimator_weighted_average}
\begin{align*}
    \hat{\mW}_{G,n,h}  =  \sum_{g=1}^{G-1} \frac{T}{Z}\hat{\mM}_{g,T}  + \frac{nH+h}{Z} \hat{\mM}_{G,n,h}. 
\end{align*}

The regret upper bound for this choice of bias matrix amounts to:
\begin{align}
    MTR_T(\gT) & \leq C C_\psi HdC_\psi' \sqrt{C_{\phi, \lambda}d'T \log\left( T + \frac{T^3C_\phi Var_{\hat{\mW}_{G,n,h}}
    }{d}\right)  }
    \label{eq:low_bias_estimator_dominant_term_mtr}
\end{align}

 By the triangle inequality we have:
\begin{equation*}
    \sqrt{Var_{\hat{\mW}_{G,n,h}}} = \sqrt{\mathbb{E}_{\mM \sim \gT }\Bigg[\Vert{\mM - \hat{\mW}_{G,n,h}} \Vert_F\Bigg]} \leq \sqrt{Var_{\Bar{\mM}}} + \sqrt{\epsilon_{G,n,h}(\gT)}\;.
\end{equation*}
Plugging this into equation \ref{eq:low_bias_estimator_dominant_term_mtr} yields the desired meta transfer regret bound. It remains to analyse the estimation error $\epsilon_{G,T}(\RM)$:
    \begin{align*}
    \sqrt{\epsilon_{G,T}(\RM)} &= \Vert{\bar{\mM} - \hat{\mW}^\lambda_{G,T}}_F \Vert\\
    &    \leq \Vert{\bar{\mM} - \bar{\mW}_{G, T}}\Vert_F + \Vert{\bar{\mW}_{N,T} - \hat{\mW}^\lambda_{G,T}}\Vert_F\\
    &= H_{\RM}(G+1, \bar{\mM}) + \Vert{\bar{\mW}_{N,T} - \hat{\mW}^\lambda_{G,T}}\Vert_F\\
        &\leq H_{\RM}(G+1, \bar{\mM}) + \max_{g \in \left[ G\right]}\Vert{\mM_g - \hat{\mM}_{g,T}}\Vert_F\\
    &\leq H_{\RM}(G+1, \bar{\mM}) + \max_{g \in \left[ G\right]}\frac{\Vert{\left(\mM_g - \hat{\mM}_{g,T}\right) (\mV^\lambda_{g,T})^{1/2}}\Vert_F}{\lambda^{1/2}_{\min}(\mathbf{V}^\lambda_{g,T})}\\
        &\leq H_{\RM}(G+1, \bar{\mM}) + \max_{g \in \left[ G\right]}\frac{\beta^\lambda_j\big(1/T\big)}{\lambda^{1/2}_{\min}(\mathbf{V}^\lambda_{g,T})}. 
\end{align*}
\end{proof}

\section{Proof of the transfer regret for the high bias estimator}
\label{sec:app:high_bias_transfer_regret_proof}
The estimator as well as the way to prove its properties is adapted from section 5 in ~\citep{pmlr-v119-cella20a} to the matrix case. 
To this end we introduce firstly an  additional variable:
\begin{equation*}
    \Bar{\mW}_{G,t+1}' = \left(\Tilde{\mV}_{G,t} \right)^{-1} \Bigg(\sum_{g=1}^{G}\mV_{g,T} \mM_g + \mV_{N+1,t} \mM_{G+1} \Bigg) \;.
\end{equation*}

Using the triangle inequality we can now separate error causes. We get the \emph{estimation error} $\hat{\mW}^\lambda_{G,t+1} - \Bar{\mW}_{G,t+1}'$ which we handle in \ref{lemma:EstimationError_high_bias} and the \emph{estimation bias} $\Bar{\mW}_{G,t+1}' - \Bar{\mW}_{G,t+1}$ which we cover in lemma \ref{lemma:EstimationBias_high_bias}.

\begin{lemma}[Estimation error of global ridge regression]

\label{lemma:EstimationError_high_bias}
    The following rewriting holds:
    \begin{equation*}
        \hat{\mW}^\lambda_{G,t+1} - \Bar{\mW}_{G,t+1}' = \left(\Tilde{\mV}^\lambda_{G,t}\right)^{-1} \Bigg( \sum_{g=1}^{G}\sum_{s=1}^{T} \phi_{g,s} \eta_{g,s} + \sum_{s=1}^t \phi_{G+1,s} \eta_{G+1,s} \Bigg) - \lambda \left(\Tilde{\mV}^\lambda_{G,t}\right)^{-1}\Bar{\mW}_{G,t+1}'
    \end{equation*}
\end{lemma}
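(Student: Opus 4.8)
The plan is to prove this rewriting by direct algebraic expansion, the matrix analogue of the corresponding step for the global ridge estimator in \citet{pmlr-v119-cella20a}. First I would put $\hat{\mW}^\lambda_{G,t+1}$ in the $t$-time bookkeeping: by the definition of the global ridge estimator (equation~\ref{eq:bias_estimator_global_regression}) it equals $\big(\Tilde{\mV}^\lambda_{G,t}\big)^{-1}$ applied to the pooled cross-term $\sum_{g=1}^{G}\sum_{s=1}^{T}\phi_{g,s}\psi_{g,s}^T\mK_\psi^{-1}+\sum_{s=1}^{t}\phi_{G+1,s}\psi_{G+1,s}^T\mK_\psi^{-1}$, where $\Tilde{\mV}_{G,t}=\sum_{g=1}^{G}\mV_{g,T}+\mV_{G+1,t}$ is the Gram matrix of exactly those transitions (the $G$ completed tasks in full, plus the first $t$ steps of task $G+1$) and $\Tilde{\mV}^\lambda_{G,t}=\lambda\mI+\Tilde{\mV}_{G,t}$. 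Then, for each pair $(g,s)$ I would apply the linear transition model as in the proof of Theorem~\ref{thm:frobenius_oracle_ellipsoid}, now with true core $\mM_g$: since $\E[\mK_\psi^{-1}\psi_{g,s}\mid\gF_{g,s}]=\mM_g^T\phi_{g,s}$, setting $\eta_{g,s}=\mK_\psi^{-1}\psi_{g,s}-\mM_g^T\phi_{g,s}$ gives the centred decomposition $\phi_{g,s}\psi_{g,s}^T\mK_\psi^{-1}=\phi_{g,s}\phi_{g,s}^T\mM_g+\phi_{g,s}\eta_{g,s}^T$, the product $\phi_{g,s}\eta_{g,s}$ in the statement being the outer product $\phi_{g,s}\eta_{g,s}^T$.

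Summing over $s\le T$ for each $g\le G$ and over $s\le t$ for task $G+1$, and using $\sum_s\phi_{g,s}\phi_{g,s}^T=\mV_{g,T}$ and $\sum_{s\le t}\phi_{G+1,s}\phi_{G+1,s}^T=\mV_{G+1,t}$, collapses the signal part of the cross-term into $\sum_{g=1}^{G}\mV_{g,T}\mM_g+\mV_{G+1,t}\mM_{G+1}$, which by the definition of $\Bar{\mW}_{G,t+1}'$ is precisely $\Tilde{\mV}_{G,t}\,\Bar{\mW}_{G,t+1}'$. Hence
\[
\hat{\mW}^\lambda_{G,t+1}=\big(\Tilde{\mV}^\lambda_{G,t}\big)^{-1}\Tilde{\mV}_{G,t}\,\Bar{\mW}_{G,t+1}'+\big(\Tilde{\mV}^\lambda_{G,t}\big)^{-1}\Big(\textstyle\sum_{g=1}^{G}\sum_{s=1}^{T}\phi_{g,s}\eta_{g,s}+\sum_{s=1}^{t}\phi_{G+1,s}\eta_{G+1,s}\Big).
\]
To finish I would use the elementary identity $\big(\Tilde{\mV}^\lambda_{G,t}\big)^{-1}\Tilde{\mV}_{G,t}=\big(\Tilde{\mV}^\lambda_{G,t}\big)^{-1}\big(\Tilde{\mV}^\lambda_{G,t}-\lambda\mI\big)=\mI-\lambda\big(\Tilde{\mV}^\lambda_{G,t}\big)^{-1}$ and bring $\Bar{\mW}_{G,t+1}'$ to the other side, which is exactly the asserted expression for $\hat{\mW}^\lambda_{G,t+1}-\Bar{\mW}_{G,t+1}'$.

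This is essentially a bookkeeping identity, so I do not expect a genuine obstacle; the only points needing care are (i) keeping the index conversion between the $(n,h)$-form of the estimator in equation~\ref{eq:bias_estimator_global_regression} and its $t$-form consistent, so that $\Tilde{\mV}_{G,t}$ really is the Gram matrix of the transitions inside the cross-term (the ``$G$ full tasks $+$ partial $(G\!+\!1)$-th task'' structure must be respected throughout), and (ii) the closing step $\big(\Tilde{\mV}^\lambda_{G,t}\big)^{-1}\Tilde{\mV}_{G,t}=\mI-\lambda\big(\Tilde{\mV}^\lambda_{G,t}\big)^{-1}$, which is what produces the extra term $-\lambda\big(\Tilde{\mV}^\lambda_{G,t}\big)^{-1}\Bar{\mW}_{G,t+1}'$ and encodes the shrinkage of the global ridge solution toward the zero matrix rather than toward the unknown mean core. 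With this rewriting in hand, the first summand is a vector-valued martingale in the pooled feature directions, controlled by the self-normalised inequality just as in Theorem~\ref{thm:frobenius_oracle_ellipsoid}, and together with the estimation-bias Lemma~\ref{lemma:EstimationBias_high_bias} it feeds into the bound on $\epsilon_{G,T}(\RM)$ in Theorem~\ref{thm:transfer_regret_high_bias_bias_estimator}.
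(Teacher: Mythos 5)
Your proposal is correct and is exactly the computation the paper has in mind — the paper's own proof of this lemma is just a one-line deferral to the analogous vector-case identity in \citet{pmlr-v119-cella20a}, and you have carried out that algebraic expansion (decompose each $\phi_{g,s}\psi_{g,s}^T\mK_\psi^{-1}$ into signal $\phi_{g,s}\phi_{g,s}^T\mM_g$ plus residual outer product, collapse the signal into $\Tilde{\mV}_{G,t}\Bar{\mW}_{G,t+1}'$, and apply $(\Tilde{\mV}^\lambda_{G,t})^{-1}\Tilde{\mV}_{G,t}=\mI-\lambda(\Tilde{\mV}^\lambda_{G,t})^{-1}$) in the matrix setting. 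Your handling of the index bookkeeping is, if anything, more careful than the paper's own statement of the estimator.
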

\begin{proof}
Follows immediately from \citep{pmlr-v119-cella20a}.
\end{proof}

\begin{lemma}[Estimation bias of global ridge regression]\label{lemma:EstimationBias_high_bias}
 From Section \ref{sec:app:low_bias_transfer_regret_proof}, we use:
\begin{align*}
    \Bar{\mW}_{G,t+1} = \frac{1}{GT+t} \left( \sum_{g=1}^{G} T \mM_g + t \mM_{G+1} \right).
\end{align*}
Differently from $\Bar{\mW}_{G,t}'$ this definition is a weighted average of the transition cores of the $G$ previously encountered tasks. Thus, we have:
\begin{align*}
    \Vert \Bar{\mM} - \Bar{\mW}_{G,t}'\Vert_F
        &\leq \frac{1}{NT+t} \sum_{g=1}^{G}\bigg[ \Vert \Bar{\mM} - \Bar{\mW}_{G,t} \Vert_F + (GT+t) \Vert \Bar{\mW}_{G,t} - \Bar{\mW}_{G,t}' \Vert_F\bigg]\\
    &= H_{\gT}(G + 1,\Bar{\mM}) + \Vert\Bar{\mW}_{G,t} - \Bar{\mW}_{G,t}' \Vert_F
\end{align*}
where we have denoted with $H_\gT(G + 1,\Bar{\mM})$ according to what we have done in subsection~\ref{subsection:buc-matrixrl-low_bias_estimator}. We can now focus on the term $\Vert\Bar{\mW}_{G,t}' - \Bar{\mW}_{G,t} \Vert_F$ which can be equivalently rewritten as:
\begin{align*}
         \Vert\Bar{\mW}_{G,t+1}' - \Bar{\mW}_{G,t+1}\Vert_F &= \bigg\Vert\left(\Tilde{\mV}_{G,t}\right)^{-1} \sum_{g=1}^{G} \left( \mV_{g,T} \mM_g + \mV_{G+1,t} \mM_{G+1} \right) - \Bar{\mW}_{G,t}\bigg\Vert_F\\
    &\leq \sum_{g=1}^{G} \Big| \Tilde{\mV}_{G,t}^{-1} \mV_{g,T} \Big| \Vert \mM_g - \Bar{\mW}_{G,t}\Vert_F + \Big| \Tilde{\mV}_{G,t}^{-1} \mV_{G+1,t} \Big| \Vert \mM_{G+1} - \Bar{\mW}_{G,t}\Vert_F\\
        &\leq \sum_{g=1}^{G} H_\gT(G+1,\mM_g) \Big| \Tilde{\mV}_{G,t}^{-1} \mV_{g,T} \Big| + H_\gT(G+1,\mM) \Big|\Tilde{\mV}_{G,t}^{-1} \mV_t \Big|\\
    &= \sum_{g=1}^{G} H_\gT(G+1,\mM_g) \sigma_{\max} \bigg(\mV_{g,t}\Tilde{\mV}^{-1}_{G,t} \bigg) + H_\gT(G+1,\mM_{G+1}) \sigma_{\max} \bigg(\mV_{g,t}\Tilde{\mV}^{-1}_{G,t} \bigg)\\
        &\leq (G+1) \max_{g=1,\dots,G+1} \Bigg( H_\gT(G+1, \mM_j) \sigma_{\max} \bigg( \mV_{g,t}\Tilde{\mV}^{-1}_{G,t} \bigg) \Bigg) \\
    &= (G+1) \max_{g=1,\dots,G+1} \Tilde{H}(G+1, \mM_g)
\end{align*}
We have used the fact that the matrix norm of a given matrix $A$ induced by the Euclidean norm corresponds to the spectral norm, which is the largest singular value of the matrix $\sigma_{\max}(A)$ .

\end{lemma}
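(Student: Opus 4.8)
The statement to establish is the bound on the estimation bias $\Vert \Bar{\mM} - \Bar{\mW}_{G,t}'\Vert_F$, whose ultimate form is $(G+1)\max_{g}\Tilde{H}(G+1,\mM_g)$. The plan is to proceed in three stages: peel off the population-level error by the triangle inequality, rewrite the discrepancy between the matrix-weighted pseudo-estimator $\Bar{\mW}_{G,t}'$ and the scalar-weighted empirical mean $\Bar{\mW}_{G,t}$ as a single weighted sum of per-task deviations, and then control each summand by submultiplicativity. First I would apply $\Vert \Bar{\mM} - \Bar{\mW}_{G,t}'\Vert_F \le \Vert \Bar{\mM} - \Bar{\mW}_{G,t}\Vert_F + \Vert \Bar{\mW}_{G,t} - \Bar{\mW}_{G,t}'\Vert_F$, identifying the first term with $H_\gT(G+1,\Bar{\mM})$ directly from its definition; the second term is the genuine object of interest.

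For that second term the key observation is that the matrix weights appearing in $\Bar{\mW}_{G,t}' = \Tilde{\mV}_{G,t}^{-1}\left(\sum_{g=1}^G \mV_{g,T}\mM_g + \mV_{G+1,t}\mM_{G+1}\right)$ form a partition of the identity: since $\Tilde{\mV}_{G,t} = \sum_{g=1}^G \mV_{g,T} + \mV_{G+1,t}$, we have $\Tilde{\mV}_{G,t}^{-1}\left(\sum_{g=1}^G \mV_{g,T} + \mV_{G+1,t}\right) = \mI$. This lets me write the constant matrix $\Bar{\mW}_{G,t}$ as the very same matrix-weighted average of itself, so that the difference telescopes into $\Bar{\mW}_{G,t}' - \Bar{\mW}_{G,t} = \sum_{g=1}^G \Tilde{\mV}_{G,t}^{-1}\mV_{g,T}\left(\mM_g - \Bar{\mW}_{G,t}\right) + \Tilde{\mV}_{G,t}^{-1}\mV_{G+1,t}\left(\mM_{G+1} - \Bar{\mW}_{G,t}\right)$, a sum of $G+1$ terms each carrying a per-task \emph{deviation} rather than a raw core.

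Applying the triangle inequality over these $G+1$ summands and the submultiplicative bound $\Vert AB\Vert_F \le \sigma_{\max}(A)\Vert B\Vert_F$ controls each term by $\sigma_{\max}\!\left(\Tilde{\mV}_{G,t}^{-1}\mV_{g,T}\right)\Vert \mM_g - \Bar{\mW}_{G,t}\Vert_F$, where $\Vert \mM_g - \Bar{\mW}_{G,t}\Vert_F = H_\gT(G+1,\mM_g)$ by definition. The one genuinely technical point is that the products $\Tilde{\mV}_{G,t}^{-1}\mV_{g,T}$ are not symmetric, yet their largest singular value equals that of the transposed product $\mV_{g,T}\Tilde{\mV}_{G,t}^{-1}$; this follows from invariance of the spectral norm under transposition together with symmetry of both $\Tilde{\mV}_{G,t}^{-1}$ and $\mV_{g,T}$. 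That converts the bound into $H_\gT(G+1,\mM_g)\,\sigma_{\max}\!\left(\mV_{g,T}\Tilde{\mV}_{G,t}^{-1}\right) = \Tilde{H}(G+1,\mM_g)$, the product of the weighted estimation error and the task-misalignment factor.

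Bounding each of the $G+1$ terms by the maximum then collapses the sum to $(G+1)\max_{g}\Tilde{H}(G+1,\mM_g)$, which combined with the $H_\gT(G+1,\Bar{\mM})$ term from the first stage gives the claimed bound. I expect the main obstacle to be the partition-of-identity rewriting in the second stage: arranging the matrix weights so that they act on the deviations $\mM_g - \Bar{\mW}_{G,t}$ rather than on $\mM_g$ alone is precisely what makes the misalignment factor $\sigma_{\max}\!\left(\mV_{g,T}\Tilde{\mV}_{G,t}^{-1}\right)$ emerge and keeps the bound small when the task Gram matrices are well aligned. Everything after this rewriting — submultiplicativity, the transpose identity for the spectral norm, and taking the maximum — is routine norm bookkeeping.
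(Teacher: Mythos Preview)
Your proposal is correct and follows essentially the same route as the paper: triangle inequality through $\Bar{\mW}_{G,t}$, the partition-of-identity rewriting $\Tilde{\mV}_{G,t}^{-1}\sum_g \mV_{g,T} = \mI$ to expose the per-task deviations $\mM_g - \Bar{\mW}_{G,t}$, submultiplicativity with the spectral norm, and a max over the $G+1$ terms. If anything, your write-up is more explicit than the paper's about the partition-of-identity step and the transpose identity $\sigma_{\max}(\Tilde{\mV}_{G,t}^{-1}\mV_{g,T}) = \sigma_{\max}(\mV_{g,T}\Tilde{\mV}_{G,t}^{-1})$, both of which the paper uses without comment.
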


\HighBiasMetaRegretBucMatrixRL*
\begin{proof}

The bound on the meta transfer regret is obtained similar to the first part of the proof of theorem \ref{thm:transfer_regret_low_bias_bias_estimator} by using the triangle inequality:
\begin{equation*}
    \sqrt{Var_{\hat{\mW}_{G,n,h}}} =\sqrt{\E_{\mM \sim \gT }\Bigg[\Vert\mM - \hat{\mW}_{G,T}^\lambda \Vert^2_F\Bigg]} \leq \sqrt{Var_{\Bar{\mM}}} + \sqrt{\epsilon_{G,t}(\gT)} \;.
\end{equation*}
It thus remains to bound the estimation error. According to lemma \ref{lemma:EstimationBias_high_bias} we rewrite:

\begin{equation*}
    \sqrt{\epsilon_{G,t}(\gT)} \leq H_\gT(G + 1,\Bar{\mM}) +  (G+1) \max_{g=1,\dots,G+1} \Tilde{H}(G+1, j) + \Vert\Bar{\mW}_{G,T}' - \Hat{\mW}_{G,T}^\lambda\Vert_F
\end{equation*}
Recalling the definition of the Frobenius Malahonobis norm for matrices (equation \ref{eq:matrix_malahonis_frobenius}) it remains only to apply lemma \ref{lemma:EstimationError_high_bias} which gives:
\begin{align*}
    \Vert \Bar{\mW}_{G,T}' - \Hat{\mW}_{G,T}^\lambda \Vert_F
        &=  \bigg\Vert\left(\Tilde{\mV}^\lambda_
        {G,T}\right)^{-1} \left( \sum_{g=1}^{G}\sum_{s=1}^{T} \phi_{g,s} \eta_{g,s} + \sum_{s=1}^T \phi_s \eta_s \right)\bigg\Vert_F+ \Vert\lambda \left(\Tilde{\mV}^\lambda_{G,T}\right)^{-1}\Bar{\mW}_{G,T}' \Vert_F \\
    &\leq \bigg\Vert\sum_{g=1}^{G}\sum_{s=1}^{T} \phi_{g,s} \eta_{g,s} + \sum_{s=1}^T \phi_s \eta_s\bigg\Vert_{\left(\Tilde{\mV}^\lambda_{G,T}\right)^{-2}} + \lambda \Vert\Bar{\mW}_{G,T}'\Vert_{_{\left(\Tilde{\mV}^\lambda_{G,T}\right)^{-2}}}\\
        &\leq \frac{1}{\lambda^\frac{1}{2}_{\min}(\Tilde{\mV}^\lambda_{G,T})}\bigg\Vert\sum_{g=1}^{G}\sum_{s=1}^{T} \phi_{g,s} \eta_{g,s} + \sum_{s=1}^T \phi_s \eta_s\bigg\Vert_{\left(\Tilde{\mV}^\lambda_{G,T}\right)^{-1}} + \frac{1}{\lambda_{\min}(\Tilde{\mV}^\lambda_{G,T})} \Vert\Bar{\mW}_{G,T}'\Vert_{F}\\
    &\leq \frac{1}{\lambda^\frac{1}{2}_{\min}(\Tilde{\mV}^\lambda_{G,T})} C_\psi'\sqrt{2\log\bigg(T+\frac{(GT + T)TL^2}{\lambda d}\bigg)} + \Vert\Bar{\mW}_{G,T}' - \Bar{\mW}_{G,T}\Vert_{F} + \frac{1}{\lambda_{\min}(\Tilde{\mV}^\lambda_{G,T})} \Vert\Bar{\mW}_{G,T}\Vert_F\\
        &\leq \frac{1}{\lambda^\frac{1}{2}_{\min}(\Tilde{\mV}^\lambda_{G,T})} C_\psi'\sqrt{2\log\bigg(T+\frac{(GT + T)TL^2}{\lambda d}\bigg)} + \Vert\Bar{\mW}_{G,T}' - \Bar{\mW}_{G,T}\Vert_{F} + \frac{S}{\lambda_{\min}(\Tilde{\mV}^\lambda_{G,T})}\\
    &\leq \frac{1}{\lambda^\frac{1}{2}_{\min}(\Tilde{\mV}^\lambda_{G,T})} C_\psi'\sqrt{2\log\bigg(T+\frac{(GT + T)TL^2}{\lambda d}\bigg)} + (G+1) \max_{g=1,\dots,G+1} \Tilde{H}(G+1, j) \\ & \quad + \frac{S}{\lambda_{\min}(\Tilde{\mV}^\lambda_{G,T})}\\
\end{align*}
In the last inequality we used again lemma \ref{lemma:EstimationBias_high_bias}.
We can now introduce $\nu_{\min} = \lambda_{\min}\big(\Tilde{\mV}_{G,T}\big)$ as eigenvalue of the global feature matrix. It follows as desired:

\begin{align}
    \sqrt{\epsilon_{G,T}(\RM)} & \leq H_{\RM}(G+1, \Bar{M})  + 2(G+1)\max_{g \in \left[G+1 \right]} \Tilde{H}(G+1, M_g)\\
    & \quad \quad + \frac{d C_M}{\lambda + \nu_{\min}} + C_\psi' \sqrt{ \frac{2}{\lambda + \nu_{\min}}\log\left(GH + \frac{GN^2H^2C_\phi}{\lambda d}\right)} \;.
\end{align}

\end{proof}

\end{document}